\documentclass[11pt]{article}
\usepackage[margin=1in]{geometry}
\usepackage{float}
\usepackage{enumitem}
\usepackage{graphicx}
\usepackage{bm}
\usepackage{natbib}
\usepackage{amsthm,amssymb,amsmath,dsfont,mathtools,mathrsfs}
\usepackage{authblk}
\usepackage{caption}
\usepackage[toc,page]{appendix}
\usepackage[dvipsnames]{xcolor}
\usepackage{tcolorbox}
\usepackage{subcaption}
\definecolor{lightblue}{rgb}{0.68, 0.85, 0.9}
\usepackage{hyperref}
\usepackage{xurl}
\usepackage[ruled,vlined]{algorithm2e}
\usepackage{wrapfig}
\usepackage{needspace} 
\usepackage{xspace}

\usepackage{amsmath,amsfonts,bm}

\def\eqref#1{equation~\ref{#1}}

\def\1{\bm{1}}

\DeclareMathAlphabet{\mathsfit}{\encodingdefault}{\sfdefault}{m}{sl}
\SetMathAlphabet{\mathsfit}{bold}{\encodingdefault}{\sfdefault}{bx}{n}

\newtheorem{theorem}{Theorem}
\newtheorem{corollary}[theorem]{Corollary}

\newtheorem{definition}[theorem]{Definition}

\newcommand{\Ver}{\mathcal{V}}
\newcommand{\ver}{\mathit{v}}
\newcommand{\Spec}{\mathcal{S}}
\newcommand{\spec}{\mathit{s}}
\newcommand{\Sys}{\textsc{Saguaro}\xspace}

\newcommand{\AM}[1]{}
\newcommand{\TK}[1]{}

\definecolor{diffblue}{rgb}{0.05,0.25,0.85}

\title{Speculative Speculative Decoding}
\author[ ]{\normalsize
    Tanishq Kumar\textsuperscript{1,*}\quad Tri Dao\textsuperscript{2,3} \quad Avner May\textsuperscript{3}
}

\affil[1]{Department of Computer Science, Stanford University}
\affil[2]{Department of Computer Science, Princeton University}
\affil[3]{Together AI}

\date{}

\begin{document}

\maketitle
\enlargethispage{\baselineskip}
\begingroup\renewcommand\thefootnote{*}\footnotetext{Correspondence to \texttt{tanishq@stanford.edu}. Code at \url{https://github.com/tanishqkumar/ssd}.}\endgroup\setcounter{footnote}{0}%
\vspace{-30pt}

\begin{abstract}

Autoregressive decoding is bottlenecked by its \textit{sequential} nature.
Speculative decoding has become a standard way to accelerate inference by using a fast draft model to predict upcoming tokens from a slower target model, and then verifying them \textit{in parallel} with a single target model forward pass.
However, speculative decoding itself relies on a \textit{sequential} dependence between speculation and verification.
We introduce \textit{speculative speculative decoding} (SSD) to parallelize these operations.
While a verification is ongoing, the draft model \textit{predicts} likely verification outcomes and prepares speculations pre-emptively for them.
If the actual verification outcome is in the predicted set, a speculation can be returned immediately, eliminating drafting overhead entirely.
We identify three key challenges presented by speculative speculative decoding, and suggest principled methods to solve each.
The result is \Sys, an optimized SSD algorithm. Our implementation is
 on average 30\% faster than the strongest speculative decoding
 baselines and up to 5x faster than
  autoregressive decoding with open source inference engines.
\end{abstract} 

\vspace{-0.1cm}

\section{Introduction}
\label{sec:intro}
\vspace{-3pt}
Modern AI demands fast inference. Yet standard language model decoding generates single tokens sequentially, failing to leverage the massive parallel computation available on modern hardware. Speculative decoding \citep{leviathan2023, chen2023} (SD) is a technique introduced to alleviate this problem. 
Instead of slow, autoregressive sampling from a desired ``target model," it uses a fast ``draft model'' to predict the next few tokens that would be generated by the target model and 
then ``verifies'' these tokens in one \textit{parallel} forward pass of the target model. This verification is done according to an algorithm that guarantees the resulting tokens are drawn from the distribution of the target model.
In each verification, the target model decides both how many speculated tokens to accept, \textit{and} samples an additional \textit{bonus token} that follows all of the accepted tokens.
Though speculative decoding is effective, it is \textit{itself} limited by a \textit{sequential} dependence: verification must complete before the next speculation can begin. We ask:
\begin{center}
\vspace{-4pt}
\textit{Can we eliminate the sequential dependence between drafting and verification?}
\vspace{-4pt}
\end{center}

We introduce \textit{speculative speculative decoding} (SSD), a unifying framework for methods that aim to parallelize drafting and verification.
While in SD the draft model \textit{waits} for the verification to complete before beginning to speculate the next round,
in SSD the draft model \textit{predicts} what verification outcomes are most likely, and prepares (i.e., pre-speculates) for all of them \textit{in parallel} to the verification taking place.
If any of these prepared outcomes occurs, the draft model can \textit{immediately} send the pre-speculated tokens to the verifier, thereby avoiding all draft overhead. 
Like ordinary speculative decoding, SSD is lossless. Unlike ordinary speculative decoding, in SSD the draft model is located on distinct hardware from the target. 

\begin{figure}[ht]
\centering
\vspace{-10pt}
\includegraphics[width=\textwidth]{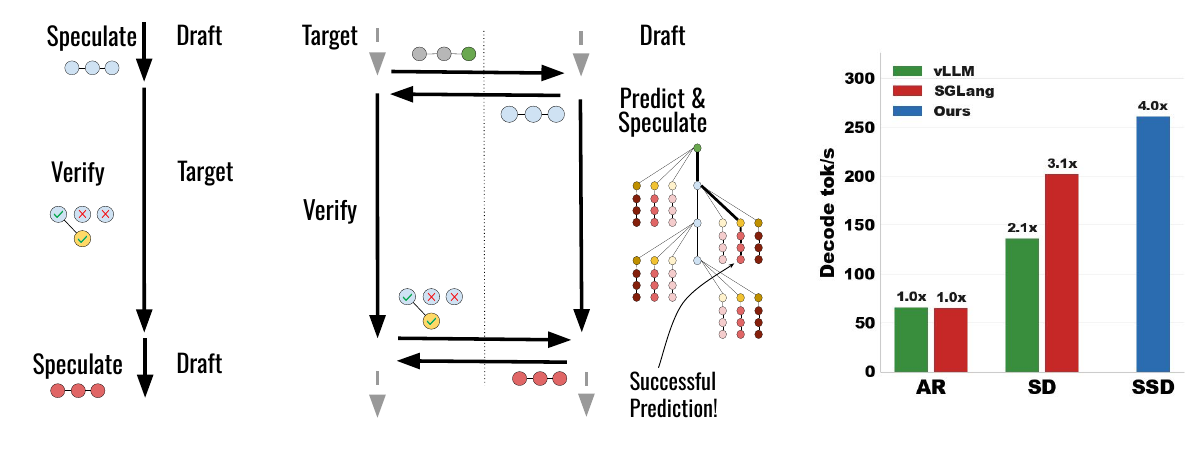}
\vspace{-15pt}
\caption{\small
    (Left) Ordinary speculative decoding (SD) requires the verifier to wait idly for the draft to speculate. 
    (Center) In our algorithm, speculation runs on a separate device (1$\times$H100) in parallel with verification; the draft precomputes speculations for many possible verification outcomes and returns the speculated tokens immediately if one occurs.
    (Right) \textit{End-to-end performance} of SSD, SD, and autoregressive (AR) decoding averaged over four datasets spanning math, code and chat, for Llama-3.1-70B on TP=4 H100s, batch size 1, greedy decoding, Llama-3.2-1B draft model.
}
\label{fig:fig1}
\vspace{-5pt}
\end{figure}

There are three main challenges in optimizing SSD algorithms.
First, the draft model must correctly anticipate the verification outcome, which includes not just how many speculated tokens were accepted, but \textit{which} bonus token was sampled. 
Second, we identify a subtle trade-off between the acceptance rate of the speculator and how well it is able to predict verification outcomes, which must be navigated carefully to maximize speedups.
Third, any SSD algorithm must have a fallback strategy to handle failed attempts at predicting verification outcomes correctly.
At large batch sizes and temperatures, these failures happen increasingly often and naïvely speculating just-in-time negates the benefits of asynchrony.


We present \Sys, an optimized SSD algorithm that incorporates targeted optimizations for each challenge:

\begin{itemize}[topsep=2pt,itemsep=2pt,parsep=0pt]
    \item In Section~\ref{sec:cache_topology}, we frame the 
    problem of predicting verification outcomes in terms of constrained optimization, and introduce a technique that uses the most likely draft logits to predict the bonus token, doing so with up to 90\% accuracy.
    \item In Section~\ref{sec:cache_sampling}, we identify a tension between accurately predicting verification outcomes and generating high-quality speculations, and develop a sampling algorithm that allows balancing the two. Appendix~\ref{app:saguaro_construction} gives a construction where this new sampling scheme necessarily yields end-to-end speedups.
    \item In Section~\ref{sec:cache_misses}, we examine various strategies to handle failed predictions, demonstrating that the optimal fallback strategy varies with batch size.
    Adopting this, \Sys outperforms SD 20\% even at larger batch sizes, despite doing more compute per batch element by decoding many possible outcomes simultaneously.
\end{itemize}

\textbf{Altogether, \Sys is on average 30\% faster than the strongest speculative decoding baselines and up to 5x faster than autoregressive generation (Figure~\ref{fig:e2e}), while strictly improving the throughput-latency Pareto frontier across a range of batch sizes.}
\section{Background}
\label{sec:background}

We briefly review speculative decoding (Section~\ref{sec:background_specdec}) and related work (Section~\ref{sec:related_work}) to provide important background and context for our work.

\subsection{Speculative Decoding}
\label{sec:background_specdec}

Speculative decoding accelerates sampling from a target distribution \(p_{\mathrm{target}}\) by using a cheaper draft distribution \(p_{\mathrm{draft}}\) to generate candidate continuations, and then using the target distribution to accept selectively.
To verify, the target first computes probabilities for all the speculated tokens in parallel in \textit{a single forward pass}.
Drafted tokens are accepted sequentially, each with probability
\(\min\{1,\,p_{\mathrm{target}}(x)/p_{\mathrm{draft}}(x)\}\).
Upon the first rejection, remaining draft tokens are discarded, and the last-token target logits are used to sample a ``bonus token.''
This is done by sampling the bonus token from a modified distribution that guarantees the resulting sequence is distributed as \(p_{\mathrm{target}}\).
This modified distribution is called the \textbf{\textit{residual distribution}}, and takes the form \(r(\cdot) \propto \max\left(p_{\mathrm{target}}(\cdot)-p_{\mathrm{draft}}(\cdot), 0\right)\).
In the case where all tokens are accepted, the residual distribution is simply the target distribution.

The expected number of generated tokens per round, and thus the overall efficiency of speculative decoding, is governed by the \textbf{\textit{acceptance rate}}:
the probability of accepting a given token in the speculation, conditioned on accepting all prior tokens.
In \citep{leviathan2023}, it is shown that the acceptance rate can be written in terms of how well the draft distribution approximates the target distribution in the following way. 

\begin{theorem} \citep{leviathan2023}
    \label{thm:l1}
    $$\alpha = \sum_x \min\{p_{\mathrm{target}}(x),\,p_{\mathrm{draft}}(x)\} = 1 - \tfrac{1}{2}\lVert p_{\mathrm{target}} - p_{\mathrm{draft}} \rVert_1$$
\end{theorem}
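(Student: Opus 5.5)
We treat $\alpha$ as the probability that a single drafted token $x \sim p_{\mathrm{draft}}$ is accepted, where acceptance happens with probability $\min\{1, p_{\mathrm{target}}(x)/p_{\mathrm{draft}}(x)\}$, as described in Section~\ref{sec:background_specdec}. The proof has two steps: a direct computation giving the first equality, and an elementary identity for $\min$ giving the second.

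\textbf{Step 1 (first equality).} Conditioning on the drafted token $x$ and applying the law of total probability gives
\[
\alpha = \sum_{x:\,p_{\mathrm{draft}}(x)>0} p_{\mathrm{draft}}(x)\,\min\Bigl\{1,\tfrac{p_{\mathrm{target}}(x)}{p_{\mathrm{draft}}(x)}\Bigr\} = \sum_{x:\,p_{\mathrm{draft}}(x)>0} \min\{p_{\mathrm{draft}}(x), p_{\mathrm{target}}(x)\}.
\]
Here we pull the positive factor $p_{\mathrm{draft}}(x)$ inside the minimum. Tokens with $p_{\mathrm{draft}}(x)=0$ are never drafted. They contribute $\min\{0,p_{\mathrm{target}}(x)\}=0$ to the full sum, so the sum extends to all $x$ without change.

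\textbf{Step 2 (second equality).} For reals $a,b$ we use the identity $\min\{a,b\} = \tfrac12(a+b) - \tfrac12|a-b|$. Summing it over $x$ with $a=p_{\mathrm{target}}(x)$ and $b=p_{\mathrm{draft}}(x)$, and using that both distributions sum to $1$, gives
\[
\sum_x \min\{p_{\mathrm{target}}(x),p_{\mathrm{draft}}(x)\} = \tfrac12(1+1) - \tfrac12\sum_x |p_{\mathrm{target}}(x)-p_{\mathrm{draft}}(x)| = 1 - \tfrac12\lVert p_{\mathrm{target}}-p_{\mathrm{draft}}\rVert_1.
\]
Exchanging the sums is harmless because the vocabulary is finite. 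For countable supports the sums converge absolutely anyway.

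\textbf{Expected obstacle.} The algebra is routine. The only delicate point is interpretive: the acceptance rate is defined conditionally on all prior tokens being accepted. We handle this by noting that the statement concerns the two next-token distributions at a fixed context. Conditioning on the prefix therefore just fixes which pair $(p_{\mathrm{target}}, p_{\mathrm{draft}})$ is used, and the computation above applies unchanged. We also make the zero-probability convention explicit so that the ratio in Step 1 is well defined.
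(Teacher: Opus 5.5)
Your proposal is correct. The paper gives no proof of its own and simply cites \citep{leviathan2023}, whose argument is the same as yours: write the acceptance probability as $\mathbb{E}_{x\sim p_{\mathrm{draft}}}[\min\{1, p_{\mathrm{target}}(x)/p_{\mathrm{draft}}(x)\}] = \sum_x \min\{p_{\mathrm{target}}(x), p_{\mathrm{draft}}(x)\}$, then apply $\min\{a,b\} = \tfrac12(a+b) - \tfrac12|a-b|$ and use that both distributions sum to one. Your handling of zero-probability draft tokens and your fixed-context reading of $\alpha$ only make explicit what the paper leaves implicit (Leviathan et al.\ call this per-prefix quantity $\beta$ and define $\alpha$ as its expectation over prefixes).
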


\begin{definition}
We define a \textbf{speculation} at round $T$, denoted $\spec^T \coloneqq (s_1^T, \ldots, s_K^T)$, as the sequence of $K$ tokens proposed autoregressively by the draft model.
The length $K$ of the speculated sequence is called the \textbf{speculative lookahead}.
\end{definition}

\begin{definition}
We define a \textbf{verification outcome} at round $T$, denoted $\ver^T \coloneqq (k, t^*) \in \Ver^T$, where $(s_1^{T-1}, \ldots, s_k^{T-1})$ are the accepted draft tokens from round $T-1$, and $t^*$ is the \textbf{bonus token} sampled from the residual distribution (or target distribution if all tokens are accepted). 
\end{definition}

\subsection{Related Work}
\label{sec:related_work}

\textbf{Parallel Speculative Decoding.}
AMUSD~\citep{amusd} and PEARL~\citep{pearl} propose speculating the next round during ongoing verification, but only prepare for the verification outcome in which all tokens are accepted, which is one special case of many possible outcomes.
DSI~\citep{dsi} also overlaps drafting and verification, verifying multiple sequences in parallel with distinct logical copies of the verifier model. 
Mirror-SD~\citep{mirror_sd} includings branch-complete speculative rollouts conditioned on early-exit signals from the target model and heterogeneous execution across devices.
SwiftSpec~\citep{swiftspec} and SpecBranch~\citep{specbranch} prepare a larger cache consisting of a token tree branching off of the speculation being verified (thus enabling larger speedups), but both use fallback strategies that do not work at large batch sizes. SpecBranch~\citep{specbranch} constitutes (approximately) a special case of the SSD framework where only a single branching point is allowed and where the fallback speculator is equal to the regular speculator.
SwiftSpec~\citep{swiftspec} considers the special case of greedy sampling, and adopts a fallback strategy of just-in-time speculation that struggles at higher temperatures and batch sizes when cache-misses become inevitable.

\textbf{Tree-Based Speculative Decoding.}
Numerous speculative decoding methods have been proposed that increase the expected number of accepted tokens by allowing the draft model to speculate a \textit{tree} of tokens instead of a sequence, thereby giving the verifier several token options at each position~\citep{specinfer,eagle2,sequoia,specexec}.
Our method differs in important ways:
First, existing tree-based methods are still sequential: speculate, then verify.
Second, tree-based methods introduce a large amount of \textit{verifier compute}---which is quite expensive due to the size of the target model---because the entire tree must be verified in the target model forward pass.
Our method, on the other hand, scales up the \textit{speculation compute} by pre-speculating for many verification outcomes in parallel, but does not introduce any additional verifier compute.
Lastly, it is important to note that our method can be combined with these tree-based approaches for further gains (see Appendix~\ref{app:combine} for discussion). 

\textbf{Improved Draft Architectures.}
There have been many advancements in draft model architectures that can improve the acceptance rates and/or speeds of the draft model and which can be used in SSD for even better performance.
For example, EAGLE~\citep{eagle,eagle2,eagle3} allows the draft model to take as input the powerful representations from the target model for the current prefix, while GliDe~\citep{du2024glidecapelowhasslemethod} and LongSpec~\citep{yang2025longspeclongcontextlosslessspeculative} allow the draft model to perform cross-attention on the KV cache of the target model.
Alternate model architectures, like diffusion LLMs~\citep{Nie2025LargeLanguageDiffusionModels, sahoo2024simple,diffuspec,specdiff,llmfuture,dflash} or SSMs~\citep{gu2022efficiently, Gu2024Mamba,mamballama}, can also be used to increase the speed with which the draft model can produce the speculated token sequence.
We include more technical details on how SSD can fruitfully be combined with these improved draft model architectures (e.g., EAGLE-3; \cite{eagle3}) in Appendix \ref{app:combine}.

\color{black}
\section{The Speculative Speculative Decoding Framework}
\label{sec:ssd}

We introduce \textit{speculative speculative decoding}, a framework to reason about asynchronous variants of speculative decoding (Section~\ref{sec:ssd_alg}), and present results comparing the expected speed of SSD to baselines (Section~\ref{sec:ssd_theory}). \textbf{Definitions introduced here will be important in Section \ref{sec:method}.}

\subsection{Algorithm}
\label{sec:ssd_alg}

We present the SSD framework in Algorithm~\ref{alg:ssd}.
The speculator and verifier processes run in parallel on separate hardware.
While the verifier is verifying the drafted tokens from round $T$, the speculator begins speculating round $T+1$. 
It does so by predicting likely verification outcomes, and then preparing speculations for each of these outcomes in parallel, and storing these in a ``speculation cache'' (defined formally below).
Then, when it receives the actual verification outcome, it checks whether this was one of the outcomes in the cache that it had prepared for. If so, it immediately returns it.
Otherwise it defers to a fallback speculation strategy. 

For intuition on the SSD algorithm and why it is lossless, consider the similarities with speculative execution (\cite{specexec97}) for CPU optimization. 
Like speculative execution---which uses idle compute to pre-execute conditional paths in case they are later needed---SSD pre-speculates over many possible verification outcomes so that, when one occurs, the corresponding token sequence is immediately available.
That sequence is then verified exactly as in SD.
If the realized outcome was not pre-speculated, SSD falls back to a synchronous speculator, which is lossless since it reduces to ordinary SD. The following definitions will be important in understanding the SSD framework.

\begin{algorithm}[!ht]

\DontPrintSemicolon
\SetAlgoLined
\SetKwFunction{Main}{main}
\SetKwFunction{Verifier}{verifier}
\SetKwFunction{Speculator}{speculator}
\SetKwFunction{Prefill}{prefill}
\SetKwFunction{Verify}{verify}
\SetKwFunction{Speculate}{speculate}
\SetKwFunction{BuildCache}{build\_cache}
\SetKwFunction{PredictVerificationOutcomes}{predict\_verify\_outcomes}
\SetKwFunction{SpeculateOnOutcomes}{speculate\_for\_outcomes}
\SetKwFunction{Fallback}{fallback\_speculate}
\SetKwProg{Fn}{Function}{:}{}

\Fn{\Main{\text{prompt, target, primary\_draft, backup\_draft}}}{
    asynchronously launch \Speculator{prompt, primary\_draft, backup\_draft}\;
    $generated\_tokens \gets$ \Verifier{prompt, target}\;
    \Return $generated\_tokens$\;
}

\BlankLine

\Fn{\Verifier{prompt, target}}{
    $target$.\Prefill{prompt}\;
    \textbf{WAIT TO RECEIVE} $spec\_tokens$ from \Speculator\;
    $generated\_tokens \gets []$\;
    \While{True}{
        $verify\_outcome \gets target.\Verify{spec\_tokens}$\;
        $generated\_tokens$.append($verify\_outcome.tokens$)\;
        \textbf{SEND} $verify\_outcome$ to \Speculator\;
        \If{end\_token $\in verify\_outcome$}{
            \Return $generated\_tokens$\;
        }
        \textbf{WAIT TO RECEIVE} $spec\_tokens$ from \Speculator\;
    }
}

\BlankLine

\Fn{\Speculator{prompt, primary\_draft, backup\_draft}}{
    $primary\_draft$.\Prefill{prompt}\;
    $spec\_tokens \gets primary\_draft.\Speculate{prompt}$\;
    \While{True}{
        \textbf{SEND} $spec\_tokens$ to \Verifier for verification\;

        \textcolor{blue}{$outcomes \gets$ \PredictVerificationOutcomes{spec\_tokens, primary\_draft} // Sec.~\ref{sec:cache_topology}}
        
        \textcolor{blue}{$cache \gets$ \SpeculateOnOutcomes{outcomes, primary\_draft} // Sec.~\ref{sec:cache_sampling}}

        \textbf{WAIT TO RECEIVE} $verify\_outcome$ from \Verifier\;
        \If{end\_token $\in verify\_outcome$}{
            \Return
        }
        \eIf{$verify\_outcome \in cache$}{
    $spec\_tokens \gets cache[verify\_outcome]$\;
}{
     \textcolor{blue}{
        $spec\_tokens \gets \Fallback$
        (\\ \quad \quad verify\_outcome, primary\_spec, backup\_spec\\
        )  // Sec.~\ref{sec:cache_misses}
    }\;
}

    }
}

\caption{The Speculative Speculative Decoding (SSD) Framework.}
\label{alg:ssd}
\end{algorithm}

\begin{definition}
    We define the \textbf{speculation cache} $\Spec^T$ for round $T$ as a dictionary mapping from a set of verification outcomes $\Ver^T$ to precomputed speculations for those outcomes.
    We denote a speculated token sequence $\spec^{T}$ corresponding to an outcome $\ver^T \in \Ver^T$ by $\Spec^T(\ver^{T})$ as a cache lookup operation.
\end{definition}

\begin{definition}
A \textbf{cache hit} is when the outcome $\ver^T$ of verifying $\spec^{T-1}$ is contained in the speculation cache $\Spec^T$.
A \textbf{cache miss} is when $\ver^T \notin \Spec^T$.
\end{definition}

\begin{definition}
    Let $\mathbf{p_{\mathrm{hit,p}}}$ denote the probability of a cache hit on an iteration conditional on the previous iteration having been speculated by the \textit{primary} speculator. Analogously, $\mathbf{p_{\mathrm{hit, b}}}$ represents the probability of a cache hit on an iteration conditional on speculating with the \textit{backup} model. Finally, let $\mathbf{p_{\mathrm{hit}}}$ denote the \textit{unconditional} probability of a cache hit in an iteration. 
\end{definition}

\subsection{Theoretical Results}
\label{sec:ssd_theory}
We analyze the performance of \Sys relative to regular autoregressive decoding (Theorem~\ref{thm:speedup}) and ordinary speculative decoding (Corollary~\ref{cor:speedup}).
Proofs deferred to Appendix~\ref{app:theory}.

\begin{theorem}
\label{thm:speedup}
Let the primary and backup speculators take time $T_p$ and $T_b$ relative to the verifier. Let $E_{\mathrm{hit}}$ and $E_{\mathrm{miss}}$ denote the expected number of generated tokens from the primary and backup speculators, respectively. The expected speedup of Algorithm~\ref{alg:ssd} relative to autoregressive decoding is then: 
\begin{eqnarray*}
speedup_{\mathrm{SSD}} &=& \frac{p_{\mathrm{hit}} \cdot E_{\mathrm{hit}} + (1-p_{\mathrm{hit}}) \cdot E_{\mathrm{miss}}}{p_{\mathrm{hit}} \cdot \max(1, T_p) + (1-p_{\mathrm{hit}}) \cdot (1 + T_b)}. \\
\end{eqnarray*}

\end{theorem}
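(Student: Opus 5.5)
We model Algorithm~\ref{alg:ssd} as a sequence of rounds and take the speedup to be the long-run ratio of expected tokens generated per round to expected wall-clock time per round. Time is measured in units of one verifier forward pass, which is also the cost of one autoregressive token. Autoregressive decoding therefore produces one token per unit time, and the speedup equals tokens per unit time under SSD. Each round consists of one verification, which consumes the speculation prepared for it. That speculation was either a cache hit (served by the primary speculator from $\Spec^T$) or a miss (produced by the fallback/backup speculator). We classify each round by this type. By definition of $p_{\mathrm{hit}}$ as the unconditional (stationary) hit probability, a long run of $N$ rounds contains about $p_{\mathrm{hit}} N$ hit rounds and $(1-p_{\mathrm{hit}})N$ miss rounds.

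\textbf{Time per round.} In a hit round, the primary speculator builds the cache for the next outcome while the verifier runs. The two processes overlap, so the round's duration is governed by the slower of the two, $\max(1, T_p)$. When the outcome arrives the lookup is treated as instantaneous, and the next verification can start immediately. In a miss round, the cached speculations are useless, and the backup speculator must draft after the outcome is known. That drafting cannot overlap with the verification whose outcome it depends on, so the cost is sequential: $1 + T_b$. By linearity of expectation and the renewal-reward theorem (or the law of large numbers applied to the Markov chain on round types), the expected time per round is $p_{\mathrm{hit}}\max(1,T_p) + (1-p_{\mathrm{hit}})(1+T_b)$.

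\textbf{Tokens per round.} Rounds whose speculation came from the primary speculator yield $E_{\mathrm{hit}}$ tokens in expectation, counting accepted tokens plus the bonus token. Rounds from the backup yield $E_{\mathrm{miss}}$. Summing gives an expected numerator of $p_{\mathrm{hit}}E_{\mathrm{hit}} + (1-p_{\mathrm{hit}})E_{\mathrm{miss}}$. Losslessness, inherited from ordinary SD verification, guarantees that these token counts correspond to correct samples from $p_{\mathrm{target}}$. Dividing the two expectations gives the stated formula. This step is justified as a ratio of long-run averages, with reward/cycle via renewal–reward, not as an expectation of a ratio.

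\textbf{Main obstacle.} The delicate step is the bookkeeping that attributes time and tokens consistently to a round type. The speculator's work for round $T+1$ overlaps verification of round $T$, so it must be made explicit which round the $\max(1,T_p)$ and $1+T_b$ costs are charged to. One must also check that, given the round type, the types of consecutive rounds do not bias the token counts; the model here assumes the conditional expectations $E_{\mathrm{hit}}$ and $E_{\mathrm{miss}}$ depend only on which speculator drafted. I would handle this by defining the cycle as ``from the start of one verification to the start of the next.'' With that definition, each cycle contains exactly one verification and one (cached or fallback) speculation. I would then state as modeling assumptions that cache lookup and communication costs are negligible, and that the round-type process is stationary with hit frequency $p_{\mathrm{hit}}$.
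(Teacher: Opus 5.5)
Your proposal is correct and follows the paper's proof. Both condition each iteration on hit versus miss, assign $E_{\mathrm{hit}}$ or $E_{\mathrm{miss}}$ tokens and latency $\max(1,T_p)$ or $1+T_b$ (the latter because backup drafting starts only after verification), and divide expected tokens by expected latency. Your renewal--reward/ergodic framing and explicit stationarity assumption make rigorous what the paper takes for granted: that a ratio of expectations is the right notion of speedup, and that the one-round offset between the event setting a cycle's latency and the event setting its tokens washes out, since both have long-run frequency $p_{\mathrm{hit}}$ (which the paper separately derives as the stationary value $p_{\mathrm{hit,b}}/(1+p_{\mathrm{hit,b}}-p_{\mathrm{hit,p}})$ of the hit/miss recurrence).
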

The numerator corresponds to the expected number of tokens generated in each iteration of the algorithm, and the denominator corresponds to the expected latency of each iteration relative to autoregressive decoding. This implies two key corollaries. 

\begin{corollary} (\textbf{Strictly Faster Than SD})
\label{cor:stricly_faster}
    Suppose we run SD with a given speculator $\mathcal{M}$.
    Running SSD with primary and backup both set to $\mathcal{M}$ does no worse than SD (strictly better if $p_{\mathrm{hit}} > 0$, $T_{\mathrm{SD}} > 0$). 
\end{corollary}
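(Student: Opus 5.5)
Setting the primary and backup speculators both to $\mathcal{M}$ makes the numerators of the two speedup expressions coincide. It then suffices to compare expected per-iteration latencies.

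\textbf{Step 1: SD speedup.} Write $T_{\mathrm{SD}}$ for the time of $\mathcal{M}$ to produce a speculation, relative to one verifier pass, and let $E$ be the expected number of tokens generated per SD round. One SD round costs $1+T_{\mathrm{SD}}$: the drafting is followed sequentially by verification. Hence
\[
speedup_{\mathrm{SD}} = \frac{E}{1+T_{\mathrm{SD}}}.
\]
This is the same renewal-reward argument used for Theorem~\ref{thm:speedup}: tokens per round divided by time per round.

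\textbf{Step 2: specialize Theorem~\ref{thm:speedup}.} With primary and backup both equal to $\mathcal{M}$, we have $T_p=T_b=T_{\mathrm{SD}}$.
\begin{itemize}
\item On a cache hit, the speculation handed to the verifier was drafted by $\mathcal{M}$ for the realized prefix.
\item On a miss, the fallback is $\mathcal{M}$ run just-in-time on the realized prefix.
\end{itemize}
In both cases the verifier checks a length-$K$ speculation from $\mathcal{M}$ conditioned on the true prefix. This is exactly the SD setting, so $E_{\mathrm{hit}}=E_{\mathrm{miss}}=E$ and the numerator equals $E$.

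I expect this to be the main point needing care. One must argue that cached speculations have the same acceptance statistics as SD speculations. The cleanest route is to state the natural assumption that \texttt{speculate\_for\_outcomes} drafts with $\mathcal{M}$'s ordinary sampling. Then, conditioned on the outcome, the cached sequence has the same distribution as a freshly drafted one, and the expected accepted length depends only on that distribution via Theorem~\ref{thm:l1}. If the construction of the cache might bias the speculations (e.g.\ via the modified sampling of Section~\ref{sec:cache_sampling}), the corollary is meant for plain sampling, and I would say so explicitly.

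\textbf{Step 3: compare denominators.} The SSD denominator is
\[
p_{\mathrm{hit}}\max(1,T_{\mathrm{SD}})+(1-p_{\mathrm{hit}})(1+T_{\mathrm{SD}}).
\]
Since $\max(1,T_{\mathrm{SD}})\le 1+T_{\mathrm{SD}}$ for $T_{\mathrm{SD}}\ge0$, this is at most $1+T_{\mathrm{SD}}$, so $speedup_{\mathrm{SSD}}\ge speedup_{\mathrm{SD}}$.

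For strictness, note that $\max(1,T_{\mathrm{SD}})<1+T_{\mathrm{SD}}$ whenever $T_{\mathrm{SD}}>0$, because the gap is $\min(1,T_{\mathrm{SD}})>0$. If in addition $p_{\mathrm{hit}}>0$, the SSD denominator is strictly smaller than $1+T_{\mathrm{SD}}$. The numerator $E\ge1$ is positive, so the speedup is strictly larger.
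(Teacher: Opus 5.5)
Your proposal is correct and matches the paper's proof: it sets $E_{\mathrm{hit}}=E_{\mathrm{miss}}=E_{\mathrm{SD}}$ and $T_p=T_b=T_{\mathrm{SD}}$ in Theorem~\ref{thm:speedup}, then compares the resulting denominator with $1+T_{\mathrm{SD}}$ via $\max(1,T_{\mathrm{SD}})<1+T_{\mathrm{SD}}$ for $T_{\mathrm{SD}}>0$. Your extra justification of $E_{\mathrm{hit}}=E_{\mathrm{miss}}=E_{\mathrm{SD}}$, requiring that cached speculations use $\mathcal{M}$'s plain sampling rather than Saguaro sampling, is a caveat the paper leaves implicit.
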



\begin{corollary} (\textbf{Speedup Sandwich})
\label{cor:speedup}
Suppose we choose a primary speculator for which drafting completes before verification ($T_p < 1$), and a fast backup speculator ($T_b = 0$).
Then if $T_{\mathrm{SD}}, E_{\mathrm{SD}}$ represent the latency and expected number of generated tokens from a draft model in SD, then the SSD speedup over SD can be bounded by:
\begin{eqnarray*}
\Big(1 + T_{\mathrm{SD}}\Big) \cdot \frac{E_{\mathrm{hit}}}{E_{\mathrm{SD}}} \cdot p_{\mathrm{hit}} \leq \frac{speedup_{\mathrm{SSD}}}{speedup_{\mathrm{SD}}} \leq \Big(1 + T_{\mathrm{SD}}\Big) \cdot \frac{E_{\mathrm{hit}}}{E_{\mathrm{SD}}}.
\end{eqnarray*}
\end{corollary}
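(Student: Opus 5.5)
We will apply Theorem~\ref{thm:speedup} with the hypotheses $T_p<1$ and $T_b=0$, and compare the result against the standard SD speedup formula.

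First, the SD speedup. Ordinary SD with a draft of latency $T_{\mathrm{SD}}$ (relative to one verifier pass) produces $E_{\mathrm{SD}}$ tokens per round in time $1+T_{\mathrm{SD}}$. Hence $speedup_{\mathrm{SD}} = E_{\mathrm{SD}}/(1+T_{\mathrm{SD}})$. This is also the special case of Theorem~\ref{thm:speedup} with $p_{\mathrm{hit}}=0$ and backup equal to the SD draft.

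Next, the SSD speedup under the hypotheses. Since $T_p<1$ we have $\max(1,T_p)=1$, and since $T_b=0$ we have $1+T_b=1$. The denominator in Theorem~\ref{thm:speedup} therefore collapses to $p_{\mathrm{hit}}+(1-p_{\mathrm{hit}})=1$, giving
\[
speedup_{\mathrm{SSD}} = p_{\mathrm{hit}} E_{\mathrm{hit}} + (1-p_{\mathrm{hit}}) E_{\mathrm{miss}} .
\]
Dividing by $speedup_{\mathrm{SD}}$ gives the exact ratio
\[
\frac{speedup_{\mathrm{SSD}}}{speedup_{\mathrm{SD}}} = (1+T_{\mathrm{SD}})\cdot\frac{p_{\mathrm{hit}}E_{\mathrm{hit}}+(1-p_{\mathrm{hit}})E_{\mathrm{miss}}}{E_{\mathrm{SD}}} .
\]

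Both bounds now follow by bounding the numerator $p_{\mathrm{hit}}E_{\mathrm{hit}}+(1-p_{\mathrm{hit}})E_{\mathrm{miss}}$.

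For the lower bound, drop the miss term: $(1-p_{\mathrm{hit}})E_{\mathrm{miss}}\ge 0$ because expected token counts are nonnegative. In fact $E_{\mathrm{miss}}\ge 1$, since every verification yields at least the bonus token, but nonnegativity is all we need. This leaves $p_{\mathrm{hit}}E_{\mathrm{hit}}$.

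For the upper bound, we need $E_{\mathrm{miss}}\le E_{\mathrm{hit}}$, so that the numerator is a convex combination bounded by $E_{\mathrm{hit}}$. This is the main obstacle, because it is an assumption about the speculators rather than something implied by the algebra. The justification we would give is that a zero-latency backup ($T_b=0$) can be no better a drafter than the primary speculator. The extreme case makes this concrete: a backup that only emits the bonus token has $E_{\mathrm{miss}}=1$, while $E_{\mathrm{hit}}\ge 1$ always holds. We would state this as the implicit regime of the corollary, namely a backup generating no more tokens per round than the primary.

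An alternative would be to read the upper bound as $p_{\mathrm{hit}}\to 1$. That reading still requires $E_{\mathrm{miss}}\le E_{\mathrm{hit}}$ for the bound to hold at intermediate values of $p_{\mathrm{hit}}$. We would therefore make the assumption explicit and then conclude both inequalities.
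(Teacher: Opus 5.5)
Your proposal is correct and matches the paper's proof step for step: use $T_p<1$, $T_b=0$ to collapse the denominator of Theorem~\ref{thm:speedup} to $1$, then divide by $E_{\mathrm{SD}}/(1+T_{\mathrm{SD}})$. For the lower bound you drop the miss term, and for the upper bound you invoke $E_{\mathrm{miss}}\le E_{\mathrm{hit}}$. The paper imposes that same condition as an explicit parenthetical assumption. For the lower bound it cites $E_{\mathrm{miss}}\ge 1$, where, as you correctly note, nonnegativity alone suffices.
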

This equation reveals that the maximum speedup attainable by SSD is proportional to the latency reduction $(1+T_{\mathrm{SD}})$ from hiding draft latency, and the increase in expected number of generated tokens ($E_{\mathrm{hit}}/E_{\mathrm{SD}}$) from increased drafting time.
However, a low cache hit rate $p_{\mathrm{hit}}$ reduces the effectiveness of this algorithm, as shown by the lower bound.

\section{Saguaro: An Optimized SSD Algorithm}
\label{sec:method}

In this section, we present \Sys, our optimized instantiation of the SSD framework.
We present the three core optimizations in Sections \ref{sec:cache_topology} (\Sys cache construction), \ref{sec:cache_sampling} (\Sys sampling), and \ref{sec:cache_misses} (\Sys fallback), respectively, then evaluate \Sys end-to-end in Section~\ref{sec:e2e}. 

\textbf{Setup.} Unless otherwise specified, experiments are run with batch size 1 and greedy decoding, with the target model on $4\times$H100. In SD, the draft is collocated on the same hardware.
In SSD, it sits on a separate device ($1\times$H100), since it speculates asynchronously during verification.
We study two model families, Llama-3 (main text) and Qwen-3 (Appendix \ref{app:qwen}) across four datasets: Alpaca \citep{dubois2024alpacafarmsimulationframeworkmethods}, GSM8k \citep{cobbe2021trainingverifierssolvemath}, UltraFeedback \citep{cui2024ultrafeedbackboostinglanguagemodels}, HumanEval \citep{chen2021evaluatinglargelanguagemodels}. Further details in Appendix \ref{app:impl_details}. 
\color{black}

\subsection{Predicting Verification Outcomes: Building the Saguaro Cache}
\label{sec:cache_topology}

Given a speculation $\spec^T$ that is in the process of being verified, \Sys must build a cache $\Spec^T$ for the most likely verification outcomes.  
The difficulty is that the space of possible verification outcomes is vast, of size approximately $(K+1)V$ where $V$ is the model vocabulary size.
Since there are $O(KV)$ potential verification outcomes, it is impractical to preemptively speculate for \textit{all} of them in parallel.
This motivates posing verification prediction as constrained optimization: given a budget of $B$ verification outcomes, how should one select them to maximize the chances of a cache hit?
Note that the budget $B$ typically corresponding to the maximum number of outcomes the draft can finish speculating for before verification completes.

\subsubsection{Algorithm}
\label{sec:cache_topology_alg}

\begin{wrapfigure}{l}{0.4\linewidth}
  \vspace{-20pt} 
  \centering
    \includegraphics[width=\linewidth,height=0.7\linewidth,keepaspectratio]{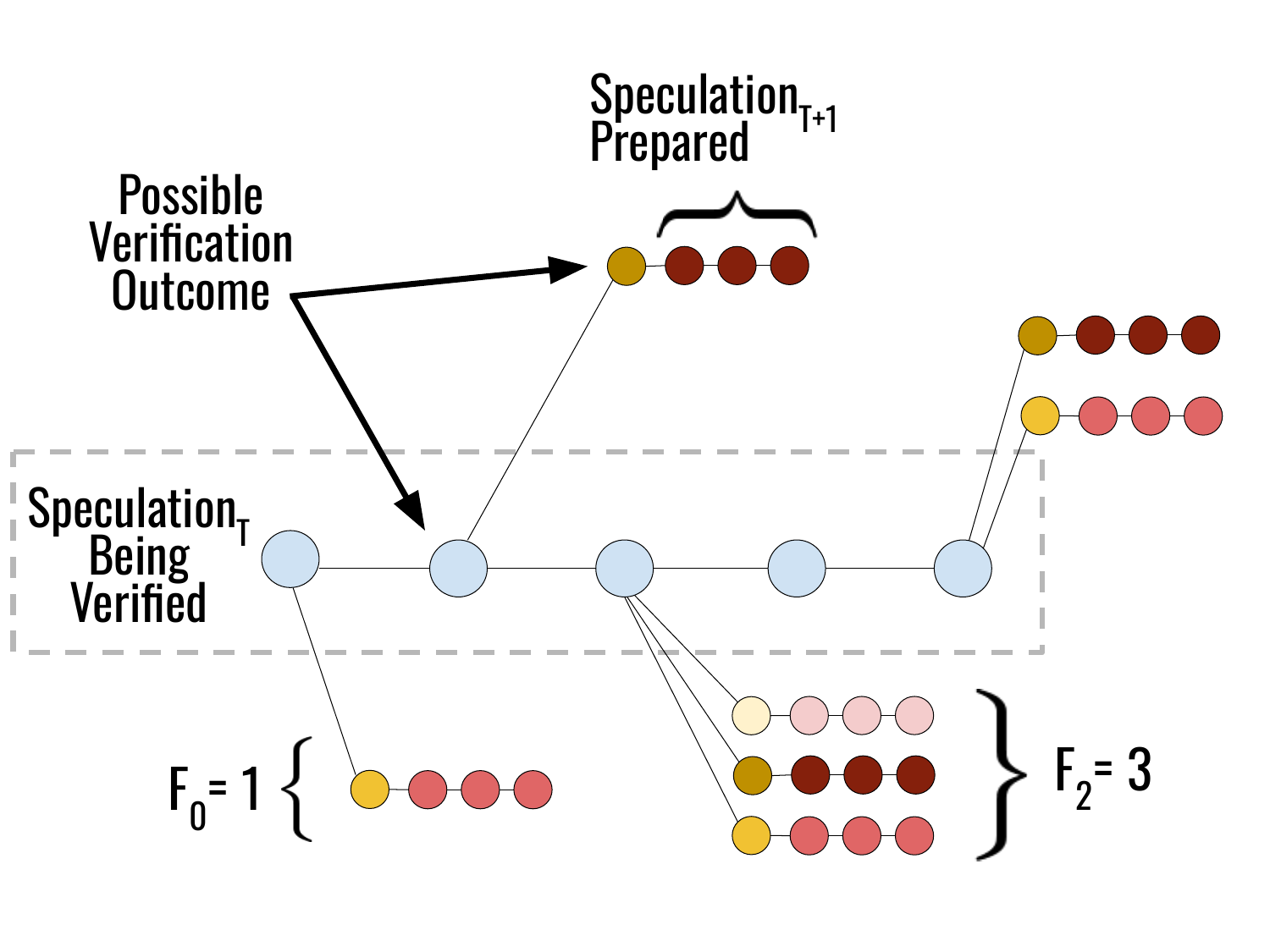}
  \caption{Schematic of speculation cache strategy. We allocate \textit{fan-out} $F_k$ (bonus token guesses) over sequence length $K+1$ according to Theorem \ref{thm:cache_topology}.}
  \vspace{-30pt} 
\end{wrapfigure}

\begin{definition}
We define the \textbf{fan-out} $F_k^p := \big| \{ v^T \coloneqq (k',t^*) \in \Spec^T \;\mid\; k'=k \} \big|$ at position $k$ to be the number of verification outcomes with $k$ accepted tokens that are included in the speculation cache, given the previous iteration was speculated by the primary speculator. $F_k^b$ is defined analogously for the \textit{backup} speculator. 
\end{definition}

\paragraph{Saguaro Verification Outcome Prediction Algorithm}

Given a fan-out strategy $\{F_k^p, F_k^b\}$ (ours based on Theorem \ref{thm:cache_topology}), \Sys takes the top-$F_k$ tokens at each lookahead position $k$ in the draft logits, and adds those to the speculation cache.
These are the top-$F_k$ logits \textit{excluding} the sampled token sent for verification at that position, which is \textit{guaranteed} not to be the bonus token.

\subsubsection{Theoretical Results}
\label{sec:cache_topology_theory}
We show how to optimize the choice of fan-out values $F_k^p$ and $F_k^b$ to maximize the speedup under a constraint on the size of the speculation cache.

The probability of a verification outcome $(k, \cdot)$ in a given iteration depends on whether the previous speculation was generated by the primary or backup speculator. 
Thus, if the default fan-out strategy when building the cache is to fan out uniformly at each sequence position, the cache hit rate on a given iteration depends on whether the previous iteration was speculated by the primary or backup speculator. 
These are the definitions of $p_{\mathrm{hit,p}}(F)$ and $p_{\mathrm{hit,b}}(F)$, respectively. 
We see in Figure~\ref{fig:phits} that these functions (in fact their complement, the rejection rate) empirically follow a power-law. 

\begin{definition}\hspace{-0.05in}\footnote{
        This definition is closely related to the notion of $b$ power-law acceptance rate in \cite{sequoia}.
    }
    \;We say that a speculator has a \textbf{$\mathbf{r}$ power-law cache hit rate} if the chance of a cache miss with fan-out $F$ is equal to a power-law of $F$ with exponent $r$, for a sequence drafted by this speculator.
    More specifically, this means $1-p_{\mathrm{hit,*}}(F) = 1/F^r \;\; \forall F \in \mathbb{N}$, for some $r > 0$.
\end{definition}

We now use this definition to reason about how to optimally select the fan-out values $F_k^p$ and $F_k^b$ under a computational constraint on the size of the cache. 
We consider the general case, when this assumption doesn't hold, in Appendix~\ref{app:theory}.

\begin{theorem}
\label{thm:cache_topology}
\textbf{(\Sys Cache Shape: Geometric Fan-Out)} Consider a draft model with acceptance rate $a_p$ and a $r$ power-law cache hit rate.
Then  the optimal choice of $F_k^p$ values for $k\in [0,K]$ under the constraint $\sum_{k=0}^K F_k^p \leq B$, follows a capped geometric series:
\begin{eqnarray*}
F_k &=& F_0 \cdot a_p^{k/(1 + r)}  \;\; \forall k < K, \; \text{and} \\
F_K &=& F_0 \cdot a_p^{K/(1 + r)} \cdot (1-a_p)^{-1/(1 + r)},
\end{eqnarray*}
where $F_0$ can be chosen such that $\sum_{k=0}^K F_k^p = B$ (closed form-equation in Appendix~\ref{app:theory}).
An equivalent result holds for $F_k^b$. 
\end{theorem}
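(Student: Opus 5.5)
We would set up the cache-hit probability as an explicit function of the fan-outs and then maximize it under the budget by a Lagrange-multiplier (continuous relaxation) argument. Under the standard i.i.d. acceptance model with acceptance rate $a_p$, the number of accepted tokens $k$ in a verification has distribution $P(k) = a_p^k(1-a_p)$ for $k<K$ and $P(K) = a_p^K$, since all $K$ tokens are accepted with probability $a_p^K$. Conditional on $k$ accepted tokens, a cache hit requires the bonus token $t^*$ to be among the $F_k$ tokens cached at position $k$. The $r$ power-law cache hit rate assumption, applied position by position, gives conditional miss probability $F_k^{-r}$. Hence
\[
p_{\mathrm{hit,p}}(F_0,\dots,F_K) = \sum_{k=0}^{K} P(k)\,\bigl(1 - F_k^{-r}\bigr) = 1 - \sum_{k=0}^K P(k)\,F_k^{-r}.
\]
Maximizing the hit rate is therefore equivalent to minimizing $\sum_k P(k) F_k^{-r}$ subject to $\sum_k F_k \le B$.

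Next we relax $F_k$ to positive reals. The objective $\sum_k P(k)F_k^{-r}$ is a sum of strictly convex, strictly decreasing functions of each $F_k$, so the budget constraint is tight at the optimum, the problem is convex, and the KKT conditions are necessary and sufficient. Stationarity of $\sum_k P(k)F_k^{-r} + \lambda(\sum_k F_k - B)$ gives $r P(k) F_k^{-r-1} = \lambda$, i.e. $F_k = (rP(k)/\lambda)^{1/(1+r)} \propto P(k)^{1/(1+r)}$. Substituting $P(k)$ yields $F_k = C\,(1-a_p)^{1/(1+r)} a_p^{k/(1+r)}$ for $k<K$ and $F_K = C\,a_p^{K/(1+r)}$. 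Setting $F_0 = C(1-a_p)^{1/(1+r)}$ recovers exactly $F_k = F_0 a_p^{k/(1+r)}$ and $F_K = F_0 a_p^{K/(1+r)}(1-a_p)^{-1/(1+r)}$. The value of $F_0$ then follows from $\sum_k F_k = B$: a finite geometric sum in $q=a_p^{1/(1+r)}$ plus the capped last term, namely
\[
F_0\Bigl[\tfrac{1-q^K}{1-q} + q^K(1-a_p)^{-1/(1+r)}\Bigr] = B.
\]
The backup case $F_k^b$ is identical, with $a_p$ replaced by the backup speculator's acceptance rate.

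The main obstacle is justifying the objective. First, we need that maximizing $p_{\mathrm{hit}}$ is the right target for speedup. By Theorem~\ref{thm:speedup}, the speedup is increasing in $p_{\mathrm{hit}}$ when $E_{\mathrm{hit}}/\max(1,T_p) \ge E_{\mathrm{miss}}/(1+T_b)$, and $E_{\mathrm{hit}}$ does not depend on the fan-out. Second, we need that the power-law miss rate holds per position with the same exponent $r$, which we would state as the interpretation of the definition. Integrality of the $F_k$ is handled by noting that the theorem is stated for the relaxed optimum, with rounding as the practical step.
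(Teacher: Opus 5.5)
Your proposal is correct and takes essentially the same route as the paper. The paper writes $p_{\mathrm{hit,p}} = a_p^K p^{K,all}_{\mathrm{hit,p}}(F_K) + \sum_{k<K} a_p^k(1-a_p)\,p^k_{\mathrm{hit,p}}(F_k)$, derives general Lagrange stationarity conditions, then substitutes the per-position power law $1-F^{-r}$ and fixes $F_0$ with the same geometric sum; your convexity argument (tight budget, sufficient KKT conditions) and your explicit condition $E_{\mathrm{hit}}/\max(1,T_p)\ge E_{\mathrm{miss}}/(1+T_b)$ for the speedup to increase in $p_{\mathrm{hit}}$ fill in details the paper's proof leaves implicit.
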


This result cleanly reflects the intuition that the lengths of verified strings follow a capped geometric distribution supported on the speculation lookahead. In other words, if it is unlikely that $j \in [0, K]$ tokens will be accepted by the verifier, we should not waste compute guessing and speculating on the bonus token at that position, and should lower $F_j$. 

\subsubsection{Empirical Evaluation}

We compare the end-to-end performance of using a naive (uniform) fan-out strategy over sequence length against the geometric fan-out strategy advanced by Theorem \ref{thm:cache_topology}.
In Figure \ref{fig:geom} we find it improves cache hit rate (right) and end-to-end decoding speed, especially at higher temperatures, where both SD and the uniform fan out begin to flounder.

\begin{figure}[ht]
    \centering
    \includegraphics[width=\linewidth]{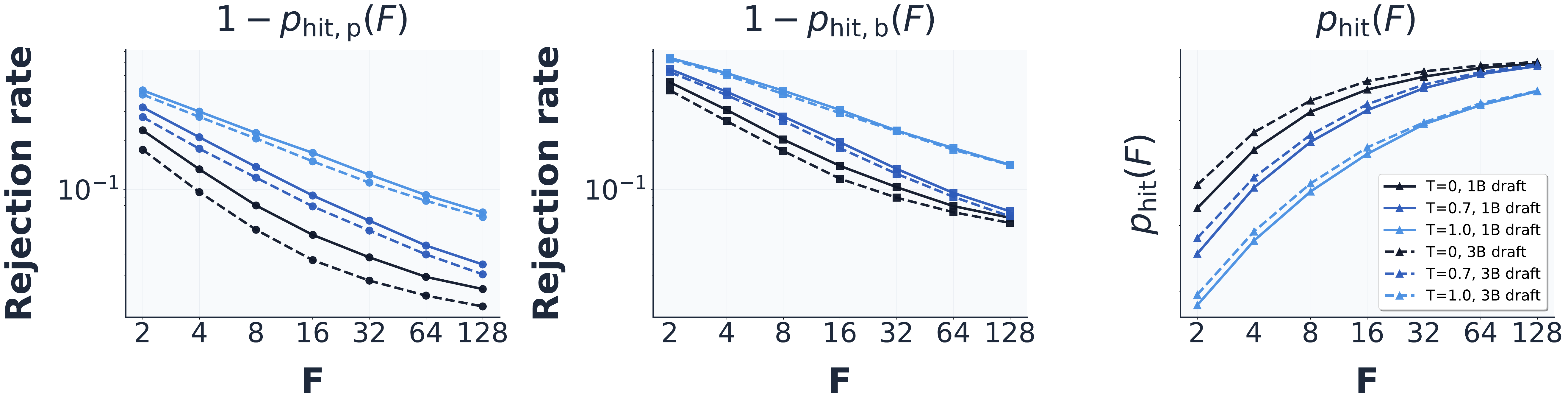}
    \caption{
        Scaling of cache hit rates with fan out.
        (Left, middle) The rejection rates (1 - $p_{hit,*}(F)$) conditioned on the prior speculation coming from the primary vs backup speculator, respectively.
        Rejection rates (cache misses) fall as a power law in the draft-fan out, demonstrating that cache hit rates increase with cache size.
        (Right) The overall cache hit rate $p_\text{hit}(F)$.
    }
    \label{fig:phits}
\end{figure}

\begin{figure}[ht]
    \centering
    \includegraphics[width=0.9\linewidth]{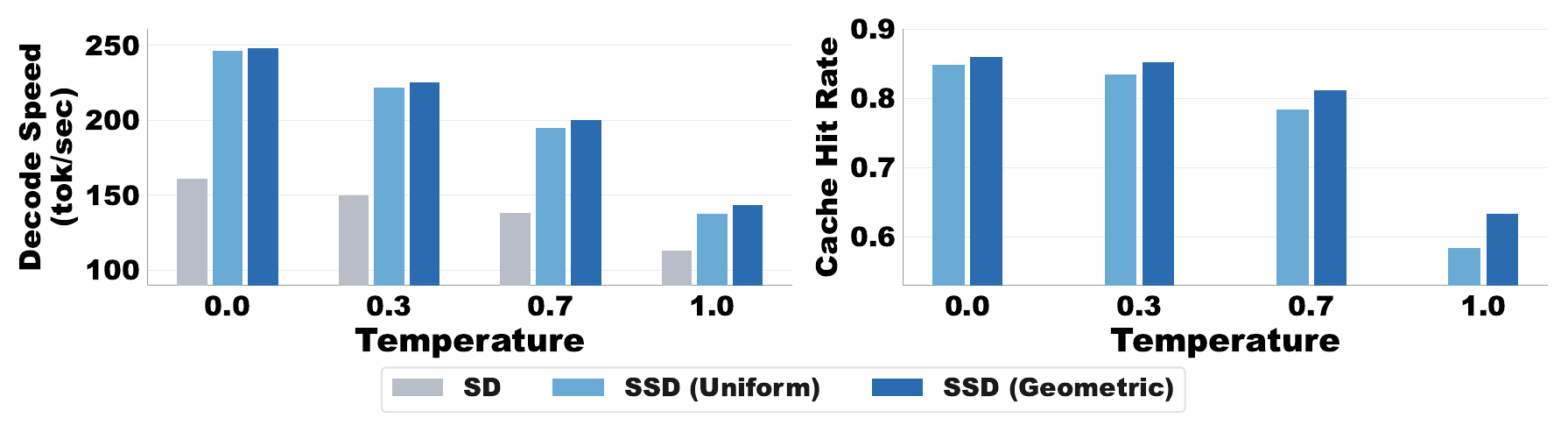}
    \caption{Advantage of geometric fan out strategy increases at higher temperatures, improving both speculation cache hit rate (right) and thus end-to-end speed (left). Results averaged over four datasets. At all temperatures, SSD with either fan out strategy outperforms ordinary speculative decoding. }
    \label{fig:geom}
\end{figure}

\subsection{Balancing Cache Hit and Acceptance Rate with \Sys Sampling}
\label{sec:cache_sampling}

The majority of the time, the bonus token is sampled from the \textit{residual distribution} (except when all tokens are accepted, in which case it is sampled from the target directly). Recall the residual distribution is given by \(r(\cdot) \propto \max\left(p_{\mathrm{target}}(\cdot)-p_{\mathrm{draft}}(\cdot), 0\right)\).
This distribution can be difficult to predict, especially as sampling temperatures increase. 
We introduce a novel sampling scheme that makes this residual easier to predict and therefore increases cache hit rates. 

This sampling scheme exploits the fact that the residual distribution is a function of the draft distribution (which includes the draft sampling scheme).
We make the bonus token easier to predict by explicitly increasing the residual probability mass on the most likely draft tokens; and this is done by \textit{decreasing} the corresponding probability mass when sampling from the draft. 
In biasing the draft distribution, however, we may decrease the acceptance rate by having moved the draft distribution farther from the target (see \ref{thm:l1}), inducing a tradeoff between the acceptance rate and the cache hit rate, both of which contribute to end-to-end speed (see Figure \ref{fig:c-sweep}, left).
We show in Appendix~\ref{app:saguaro_construction} that there exist distributions $p_{\mathrm{target}}, p_{\mathrm{draft}}$ for which biasing the draft distribution in this way leads to end-to-end speedups.

\subsubsection{Algorithm}
\label{sec:cache_sampling_alg}

\begin{definition}
We define a \textbf{sampling scheme} as a function $\sigma\colon\mathbb{R}^V \rightarrow \Delta^{V-1}$ from model logits $z_{\mathrm{draft}} \in \mathbb{R}^V$ to a probability distribution $p\in \Delta^{V-1}$.
\end{definition}

To understand how to design a sampling scheme that increases the residual probability mass on the most likely draft tokens, we first note that the probability of token $t$ in the residual distribution is proportional to $\max(p_{\mathrm{target}}(t) - p_{\mathrm{draft}}(t), 0)$.
Thus, \textit{decreasing} $p_{\mathrm{draft}}(t)$  \textit{increases} the probability of $t$ in the residual.
This is exactly what our cache-aware sampling scheme does.

\begin{definition}
Given draft logits $z\in \mathbb{R}^V$, we define the \textbf{\Sys sampling scheme} $\sigma_{F,C}(z)$ for fan-out $F$ and downweighting constant $C \in [0, 1]$ as
\begin{eqnarray*}
\sigma_{F,C}(z) \propto  \begin{cases} 
C\cdot \exp\Big(z_t\Big)& \text{if } t \in \text{top}_F(z) \\ \exp\Big(z_t\Big)& \text{otherwise,}
\end{cases}
\end{eqnarray*}
\end{definition}

In practice, $C$ is a hyperparameter that is chosen empirically.

\subsubsection{Theoretical Results}
\label{sec:cache_sampling_theory}

\begin{theorem}
\label{thm:cache_sample}
For fan-out $F$ and primary speculator logits $z$, the cache hit rate $p_{\mathrm{hit}}$ of the \Sys sampling scheme increases monotonically as $C \rightarrow 0$.
\end{theorem}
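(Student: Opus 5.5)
We reduce the statement to a single position and show that the probability the bonus token lands in the cached set is non-increasing in $C$, hence increases as $C \to 0$. Fix a lookahead position where a rejection may occur. Let $p = p_{\mathrm{target}}$, write $q_C = \sigma_{F,C}(z)$ for the draft distribution actually sampled from, and let $S = \mathrm{top}_F(z)$ be the cached bonus-token guesses. These are determined by the logits $z$ alone and do not depend on $C$. Writing $Z_C = C\sum_{t\in S} e^{z_t} + \sum_{t\notin S} e^{z_t}$, we have $q_C(t) = C e^{z_t}/Z_C$ on $S$ and $q_C(t) = e^{z_t}/Z_C$ off $S$. As $C$ decreases, $Z_C$ decreases. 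Hence $q_C(t)$ strictly decreases for $t\in S$ (at $C\to0$ it tends to $0$), and $q_C(t)$ increases for $t \notin S$.

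The quantity to control is the residual mass on $S$:
\[
h(C) = \frac{\sum_{t\in S}\max(p(t)-q_C(t),0)}{\sum_{t}\max(p(t)-q_C(t),0)} = \frac{A(C)}{A(C)+B(C)},
\]
where $A$ sums over $S$ and $B$ over the complement. The key step is monotonicity of each piece. Since $q_C(t)$ decreases on $S$, $A(C)$ is non-decreasing as $C\downarrow 0$. Since $q_C(t)$ increases off $S$, $B(C)$ is non-increasing. The map $(A,B)\mapsto A/(A+B)$ is increasing in $A$ and decreasing in $B$, so $h(C)$ is non-increasing in $C$. Equivalently, the conditional hit probability given a rejection at this position rises as $C\to0$. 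In the all-accepted case the bonus token is drawn from $p$ itself, so the hit probability there does not depend on $C$.

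We then assemble the full cache hit rate. Following Section~\ref{sec:cache_topology}, $p_{\mathrm{hit}}$ is a sum over positions $k$ of $\Pr[\text{first rejection at }k]$ times the position-$k$ hit probability, plus the all-accept term. Here lies the main obstacle: the weights also depend on $C$, because acceptance probabilities change (Theorem~\ref{thm:l1}). Changing $C$ therefore shifts mass between positions and between the rejection and full-acceptance cases, which could in principle offset the gain. I would handle this by reading the statement as in the paper: the cache hit rate is the conditional probability that the realized bonus token is among the guessed tokens given the verification outcome's length $k$, i.e.\ per-position with fan-out $F$. The per-position argument then suffices and the statement follows directly.

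If a fully unconditional claim is wanted, I would try to show each term $\Pr[\text{reject at }k]\cdot h_k(C)$ is monotone. This means bounding the product $\sum_{t\in S}\max(p-q_C,0)$ directly, since the first rejection probability at $k$ times the residual normalizer equals exactly the unnormalized residual mass. On $S$ that mass is non-decreasing as $C\downarrow0$, which gives monotonicity of the rejection-weighted hit contribution. The remaining acceptance-prefix factors would need the caveat that fan-out and cache entries are taken per realized length.
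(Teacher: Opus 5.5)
Your proposal is correct and follows essentially the same route as the paper. It fixes $S=\mathrm{top}_F(z)$, notes that draft mass shifts off $S$ as $C\downarrow 0$ so residual mass on $S$ rises and off $S$ falls, and concludes via monotonicity of $(a,b)\mapsto a/(a+b)$, with the paper likewise taking $p_{\mathrm{hit}}$ to be the hit probability when the bonus token is drawn from the residual. One small repair: for $t\in S$ the decrease of $q_C(t)$ does not follow from $Z_C$ decreasing, since numerator and denominator both shrink, so argue instead via $q_C(t)=e^{z_t}/\bigl(\sum_{s\in S}e^{z_s}+C^{-1}\sum_{s\notin S}e^{z_s}\bigr)$, or by differentiating in $C$ as the paper does.
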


The new sampling hyperparameter $C$ allows a trade-off between cache hit rate and acceptance rate. We plot this tradeoff in Figure \ref{fig:c-sweep} (left).
Figure \ref{fig:c-sweep} (right) presents an illustration of how \Sys sampling allows control of the \textit{residual distribution} by manipulating the draft distribution during speculation. 
The intuition here is that \Sys sampling deliberately suppresses the draft probabilities on the $F$ cached tokens.  
By downweighting $p_{\mathrm{draft}}(t)$ on this set, the residual $\max\!\big(p_{\mathrm{target}}(t) - p_{\mathrm{draft}}(t),\,0\big)$
is pushed to \textit{concentrate} on those same tokens, \textit{increasing} the chance that the bonus token lands inside the cache by construction.

\subsubsection{Empirical Evaluation}
We show in Figure~\ref{fig:c-sweep} how there is a trade-off between acceptance rate and cache hit rate which we can navigate by using the \Sys sampling scheme with different values of $C$. Lower values of $C$ lead to higher cache hit rates and lower acceptance rates as they bias the draft distribution away from the target to control the residual distribution.
\color{black}

\begin{figure}[ht]
    \centering
    \includegraphics[width=\linewidth]{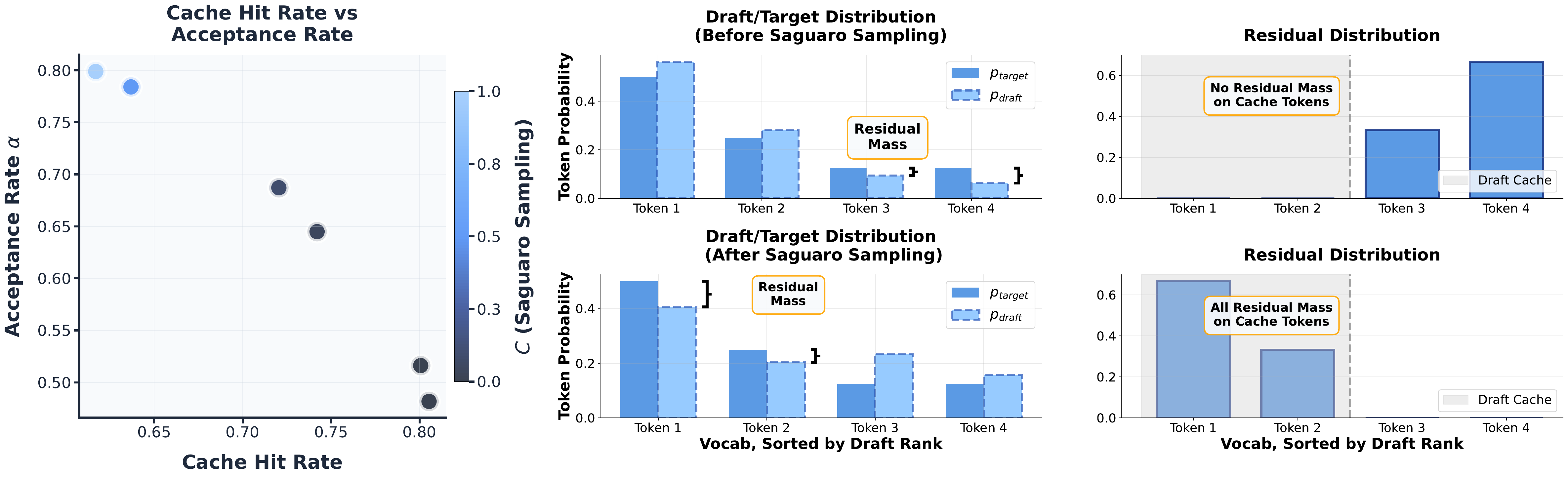}
    
    \caption{We introduce \Sys sampling, a novel sampling scheme designed specifically for SSD. (Left) It interpolates between high cache hit rate and high speculative acceptance rate. (Right) Illustrative schematic for how \Sys sampling increases residual probability mass on the top draft tokens, encouraging the sampled bonus token to lie in the speculation cache by construction. }
    \label{fig:c-sweep}
\end{figure}

\subsection{Handling Cache Misses with \Sys Fallback}
\label{sec:cache_misses}

We now discuss how we optimize the handling of cache misses in \Sys, and propose an optimal strategy for picking the backup speculator based on the batch size.

\subsubsection{Algorithm}
We design \Sys's cache miss strategy based on the observation that cache misses occur \textit{almost certainly} at large batch sizes, and that when this happens, in SSD the \textit{whole batch} must wait for the backup speculator to complete before being verified.
We propose the \textit{\Sys fallback strategy}:
set the backup speculator to be equal to the primary speculator at low batch size, then switch to a low-latency speculator ~\citep{suffixtree,infinigram,infini_mini} for larger batch sizes $b>b^*$, where the critical batch size $b^*$ is derived in the following section.

\subsubsection{Theoretical Results}
We prove that \Sys's backup speculator strategy is optimal, under the conditions that SSD uses a high-quality (slow) speculator (primary), and a lower-quality (fast) speculator (backup). We begin with a corollary to Theorem~\ref{thm:speedup} which accounts for the impact of batch size on the expected speedup attained by SSD.
This result assumes that in SSD the whole batch waits for the backup speculator to complete before verifying the batch.

\begin{corollary}
\label{cor:speedup_with_batch_size}
At batch size $b$, the expected speedup from SSD is equal to:
\begin{eqnarray*}
speedup &=& \frac{p_{\mathrm{hit}} \cdot E_{\mathrm{hit}} + (1-p_{\mathrm{hit}}) \cdot E_{\mathrm{miss}}}{p_{\mathrm{hit}}^b \cdot \max(1, T_p) + (1-p_{\mathrm{hit}}^b) \cdot (1 + T_b)}, \quad \text{which approaches} \\
    && \frac{p_{\mathrm{hit}} \cdot E_{\mathrm{hit}} + (1-p_{\mathrm{hit}}) \cdot E_{\mathrm{miss}}}{1 + T_b} \quad \text{as } b \rightarrow \infty.
\end{eqnarray*}
\end{corollary}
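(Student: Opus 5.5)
The plan is to follow the proof of Theorem~\ref{thm:speedup} and write the speedup as a ratio of two per-iteration expectations. The numerator is the expected number of tokens generated per iteration. The denominator is the expected wall-clock latency per iteration, normalized by the verifier time. The only change at batch size $b$ is in the denominator. The whole batch must wait for the slowest element, so the iteration takes the fast path only when \emph{all} $b$ sequences hit the cache. I will model the $b$ batch elements as having independent hit/miss events, each with marginal hit probability $p_{\mathrm{hit}}$, exactly as in the batch-size-one analysis.

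\textbf{Numerator.} Tokens are produced per sequence, and each sequence's accepted-token count depends only on its own hit/miss status. By linearity of expectation, a given batch element produces $p_{\mathrm{hit}} E_{\mathrm{hit}} + (1-p_{\mathrm{hit}}) E_{\mathrm{miss}}$ tokens per iteration in expectation. Autoregressive decoding at batch size $b$ produces one token per element per unit verifier time, so the factor of $b$ cancels. The per-element numerator is therefore unchanged from Theorem~\ref{thm:speedup}.

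\textbf{Denominator.}
\begin{itemize}
\item \emph{All elements hit}, with probability $p_{\mathrm{hit}}^b$ by independence. The next speculation is ready immediately and verification overlaps with drafting, so the iteration costs $\max(1,T_p)$, as in Theorem~\ref{thm:speedup}.
\item \emph{At least one element misses}, with probability $1-p_{\mathrm{hit}}^b$. By the stated assumption, the whole batch waits for the backup speculator before verifying, so the iteration costs $1+T_b$.
\end{itemize}
Taking the expectation gives $p_{\mathrm{hit}}^b\max(1,T_p) + (1-p_{\mathrm{hit}}^b)(1+T_b)$. As in Theorem~\ref{thm:speedup}, we take the ratio of expected tokens to expected time, which is the long-run rate by a renewal-reward argument. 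This yields the displayed formula.

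\textbf{Limit.} For $p_{\mathrm{hit}}<1$ we have $p_{\mathrm{hit}}^b\to 0$ as $b\to\infty$. The denominator then tends to $1+T_b$, and the numerator does not depend on $b$, which gives the limit. If $p_{\mathrm{hit}}=1$ the limit is degenerate, so I would note that the claim implicitly assumes $p_{\mathrm{hit}}<1$.

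\textbf{Main obstacle.} The delicate step is justifying the two modeling assumptions. The first is independence of hits across batch elements, which gives $p_{\mathrm{hit}}^b$. The second is that $p_{\mathrm{hit}}$ still means the unconditional per-element hit rate, even though each element's hit probability depends on whether its previous round used the primary or the backup speculator. I would handle the second point by defining $p_{\mathrm{hit}}$ as the stationary marginal per-element hit rate and assuming independence across elements at stationarity. These are the same idealizations already implicit in Theorem~\ref{thm:speedup}.
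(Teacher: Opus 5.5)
Your proposal is correct and follows the paper's proof: the per-element numerator $p_{\mathrm{hit}} E_{\mathrm{hit}} + (1-p_{\mathrm{hit}}) E_{\mathrm{miss}}$, and a batch latency of $\max(1,T_p)$ only when all $b$ independent elements hit (probability $p_{\mathrm{hit}}^b$) and $1+T_b$ otherwise, with $p_{\mathrm{hit}}^b \to 0$ giving the limit. Your added caveats — that the limit needs $p_{\mathrm{hit}}<1$, and that $p_{\mathrm{hit}}$ should be read as a stationary per-element rate independent across sequences — spell out assumptions the paper either states at the start of its appendix or uses tacitly.
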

In our implementation, the backup strategy is to return random tokens,\footnote{Note that we can easily improve upon this random token strategy by using extremely fast non-neural speculators, like those based on n-grams~\citep{infinigram,suffixtree}.} and the primary strategy is to do just-in-time speculation. 
As the batch size increases, the entire batch stalls on the latency of the backup speculator as cache misses happen more frequently.
This forces the choice of a low latency speculator at larger batch sizes, as the following theorem makes precise. 
\color{black}

\begin{theorem}
\label{thm:cache_miss}
The optimal fallback strategy for cache misses, given two speculators of varying speeds and quality, is to use the slow and accurate speculator as the backup for batch sizes $b < b^*$, and the fast speculator otherwise.
We solve for $b^*$ in Appendix \ref{app:theory}. 
\end{theorem}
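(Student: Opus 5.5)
We would prove this starting from Corollary~\ref{cor:speedup_with_batch_size}. It gives the batch-$b$ speedup as a ratio whose numerator does not depend on $b$. The denominator depends on $b$ only through the probability $p_{\mathrm{hit}}^b$ that the whole batch hits the cache.

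Fix the two candidate strategies: (S) backup equal to the slow, accurate speculator, and (F) backup equal to the fast speculator. Write
\begin{eqnarray*}
N_S &=& p_{\mathrm{hit}}^{S} E_{\mathrm{hit}} + (1-p_{\mathrm{hit}}^{S}) E_{\mathrm{miss}}^{S},\\
D_S(b) &=& (p_{\mathrm{hit}}^{S})^b \max(1,T_p) + \big(1-(p_{\mathrm{hit}}^{S})^b\big)(1+T_b^{S}),
\end{eqnarray*}
and define $N_F$ and $D_F(b)$ analogously. Here $T_b^{S} > T_b^{F}$ and $E_{\mathrm{miss}}^{S} \ge E_{\mathrm{miss}}^{F}$, so $N_S \ge N_F$.

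The unconditional hit rate depends on the backup through the stationary mixture of $p_{\mathrm{hit,p}}$ and $p_{\mathrm{hit,b}}$. To keep the argument clean, we first treat $p_{\mathrm{hit}}$ as fixed across the two strategies and note afterwards how to relax this.

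The comparison is then between the function $g(b) = N_S D_F(b) - N_F D_S(b)$ and zero; strategy S is preferred exactly when $g(b) > 0$. Write $q = p_{\mathrm{hit}}^b$, which decreases monotonically from $p_{\mathrm{hit}}$ to $0$ as $b$ grows. Using $\max(1,T_p) \le 1 + T_b$ (the regime of interest), each denominator is affine in $q$:
$$D_X(q) = (1+T_b^X) - q\,\big(1 + T_b^X - \max(1,T_p)\big).$$
Hence $g$ is affine in $q$, and since $q$ is monotone in $b$, $g$ changes sign at most once as $b$ increases. The endpoints fix the direction of the crossing:
\begin{itemize}
\item At $b=1$, assuming S is better (otherwise $b^*=1$ trivially), we have $g>0$.
\item As $b\to\infty$, we have $q\to 0$. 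Then $g \to N_S(1+T_b^F) - N_F(1+T_b^S)$, which is negative whenever $N_S/N_F < (1+T_b^S)/(1+T_b^F)$. This is the stated condition that the slow backup's quality gain does not compensate its latency; it is the limit in Corollary~\ref{cor:speedup_with_batch_size}.
\end{itemize}
So there is a unique threshold. Solving $g=0$ for $q$ gives $q^*$, and then $b^* = \log q^* / \log p_{\mathrm{hit}}$. This closed form is what we would record in Appendix~\ref{app:theory}.

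The main obstacle is that $p_{\mathrm{hit}}$ itself differs between the two strategies. The backup model's speculation changes the next round's hit rate, since $p_{\mathrm{hit,b}} \ne p_{\mathrm{hit,p}}$. The two denominators therefore involve $q_S^b$ and $q_F^b$ with different bases, and $g$ is no longer affine in a single variable.

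Our first attempt at this step would be to write $g(b)$ as a combination $A + B q_S^b + C q_F^b$ and invoke the fact that such a combination has at most two sign changes in $b$. We would then use the same endpoint signs at $b=1$ and $b\to\infty$, together with monotonicity of each term, to rule out a second crossing.

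If that fails, we would state the theorem under the simplifying assumption that hit rates coincide across strategies, which matches the paper's empirical regime. We would then check the general case numerically.
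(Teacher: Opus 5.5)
Your argument is correct and is essentially the paper's. The paper specializes to $T_p<1$ and a zero-latency fast backup, so the fast-backup speedup $p_{\mathrm{hit}}E_{\mathrm{hit}}+(1-p_{\mathrm{hit}})E_{\mathrm{miss}}$ is constant in $b$ while the slow-backup speedup $E_{\mathrm{hit}}/(1+T_p-T_p\,p_{\mathrm{hit}}^b)$ is decreasing; equating the two gives exactly the root $q^*$ (and hence $b^*=\log q^*/\log p_{\mathrm{hit}}$) that your affine-in-$q$ argument yields in that case. Your cross-multiplied version is mildly more general (it allows $T_b^F>0$ and states the existence condition $g(\infty)<0$ explicitly), and holding $p_{\mathrm{hit}}$ fixed across the two strategies is the same simplification the paper makes without comment.
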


\subsubsection{Empirical Evaluation}
We show in Figure~\ref{fig:bsz} (left) that as the batch size increases, using a fast backup speculator that returns random tokens outperforms using a slow but more accurate neural speculator just-in-time.
In Figure~\ref{fig:bsz} (right) we show that we can reduce latency further by scaling draft-time compute.
Specifically, we show the projected speedups we would get at batch size 16---computed based on our cache hit rate curves (Figure~\ref{fig:geom})---if we increase the number of draft GPUs (and thus, the fan-out factor).
Dedicating more devices to speculation leads to larger speculation caches and thus higher cache hit rates. 

\begin{figure}[H]
    \centering
    \includegraphics[width=\linewidth]{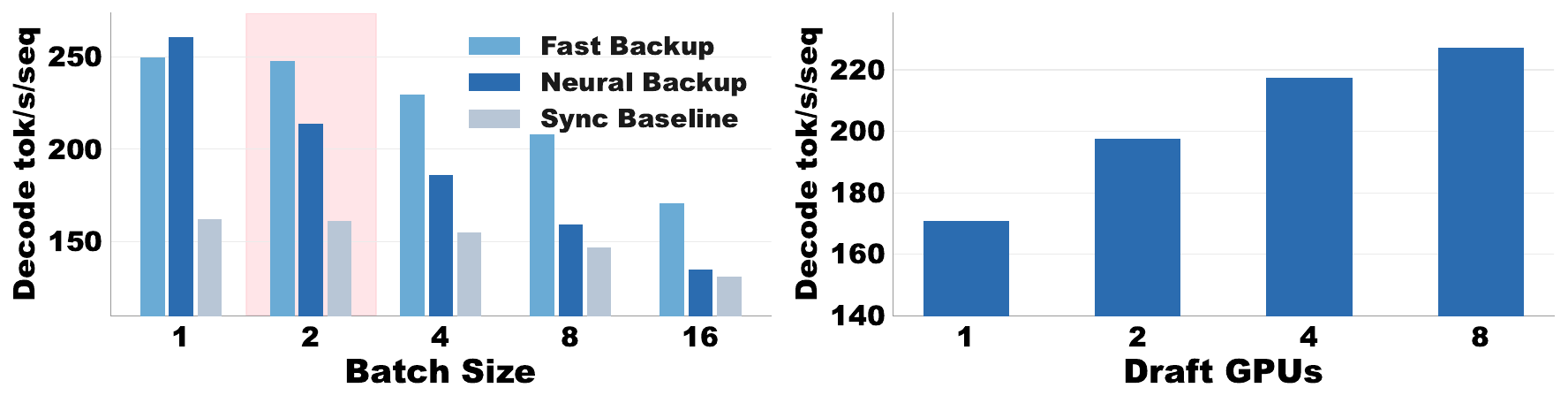}
    \caption{(Left) Optimal backup speculator (fast vs neural) depends on batch size, as Theorem \ref{thm:cache_miss} predicts (temperature 0). 
    (Right) We forecast how decoding speed improves with draft compute (and thus, fan-out) for batch size 16 with the fast backup speculator.
    At large batch sizes we become compute bound, speedups more GPUs to drafting.
    }
    \label{fig:bsz}
\end{figure}

\section{End-to-End Evaluation}
\label{sec:e2e}

In Figure~\ref{fig:e2e} we compare the end-to-end performance of \Sys to key baselines, including our own implementation of ordinary speculative decoding, as well as the strongest open source inference engines (vLLM/SGLang).\footnote{We find SGLang usually outperforms vLLM, so we use it as the main baseline throughout.}
We also include EAGLE-3 when supported by open source engines.\footnote{In our experiments Llama-3.2-1B outperforms Eagle-3 as a draft model for Llama-3.1-70B in SGLang.}
We attain almost a 5x speedup on average compared to our autoregressive baseline, and are on average 30\% faster than the strongest speculative decoding baselines, establishing a new state of the art. 

Though speculative decoding algorithms are designed to trade off compute for lower latency, we show in Figure~\ref{fig:e2e} (right) that SSD pushes the Pareto frontier across both latency and throughput, with the biggest gains at lower batch sizes.
This means that SSD does not just improve the best possible decoding speed, \textit{but is also more compute efficient per device.}


\begin{figure}[ht]
    \centering
\includegraphics[width=\linewidth]{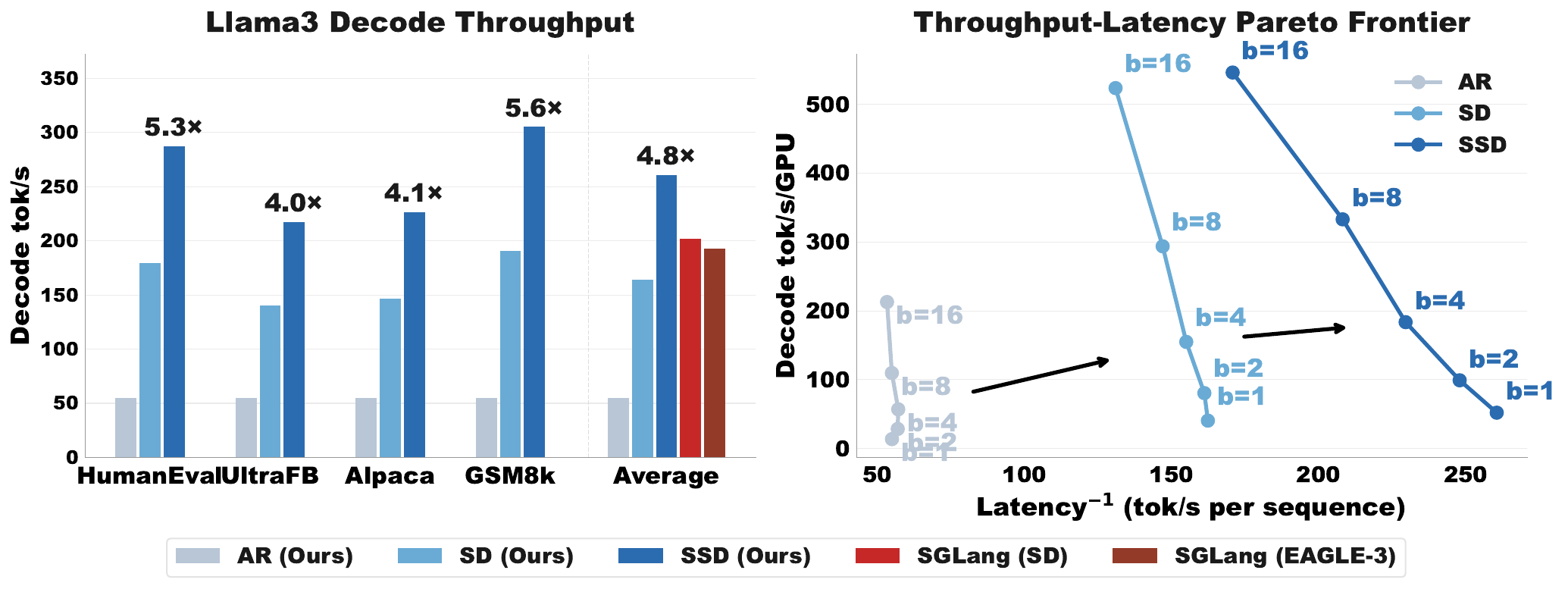}
    \caption[End-to-end decoding speed comparison.]{
    (Left) End-to-end decoding speed comparison of SSD compared to SD and standard autoregressive decoding on Llama-3.1-Instruct 70B across four datasets. (Right) SSD improves the throughput-latency Pareto frontier.\protect
    }
    \label{fig:e2e}
\end{figure}

\section{Conclusion and Limitations}

Speculative decoding trades compute for latency, but drafting and verification remain synchronous. We introduce speculative speculative decoding (SSD), which parallelizes even this sequential dependence. We derive performance bounds, study each component of the design space, and distill our findings into \Sys, an optimized SSD algorithm which is on average 30\% faster than the strongest speculative decoding baselines and up to 5x faster than autoregressive generation across datasets and model families.

Speculative decoding focuses on reducing latency, and is generally ineffective for throughput-bound workloads---large-scale RL, offline data generation---since it adds verification workload to an already compute-bound process.
Nonetheless, we show that SSD is able to push the latency-throughput Pareto frontier, leading to gains in both latency \textit{and} throughput, even after taking into account the additional hardware used.

Much remains open.
SSD composes naturally with EAGLE and token-tree speculation (Appendix~\ref{app:combine}); the joint design and tradeoff space is largely unexplored.
Latency can be further reduced by scaling the number of draft devices and thus the speculation cache, though the returns will eventually diminish.
Finally, sharing speculation endpoints across several target model deployments at the cluster level---similar to prefill-decode disaggregation~\citep{disagg}---is another natural direction.

\color{black}







\newpage 

\section*{Acknowledgements}
We gratefully acknowledge the support of the Schmidt Sciences AI2050 fellowship, the Google ML and Systems Junior Faculty Awards, and the Google Research Scholar program.
We thank Together AI for supporting this project with compute resources. Tanishq thanks Tatsunori Hashimoto and Luke Bailey for helpful comments and discussion. 

\bibliography{iclr2026_conference}
\bibliographystyle{iclr2026_conference}

\newpage
\appendix

\section{Theoretical Results}
\label{app:theory}

Unless otherwise stated, we assume throughout that $T_p<1$, so that $\text{max}(T_p,1)=1$. This is because SSD has the primary speculator (draft with custom attention mask) overlapped with verification by construction, so that it finishes before verification is complete. The speculation lookahead is chosen to make this true in our implementation. We also assume that different sequences in a batch are unrelated, so that the probability of a cache hit is iid at $p_\text{hit}$. 

\subsection{Theorem~\ref{thm:speedup} proof: Modeling the speedup from SSD}
We now prove Theorem~\ref{thm:speedup}.
We include an (extended) copy of of the theorem here for reference:
\paragraph{Theorem~\ref{thm:speedup} (extended).}
\label{thm:speedup_app}
\textit{Let the primary and backup speculators take time $T_p$ and $T_b$ relative to the verifier.
Let the expected number of generated tokens from the primary speculator be $E_{\mathrm{hit}}$ and from the backup speculator be $E_{\mathrm{miss}}$.
The expected speedup of Algorithm ~\ref{alg:ssd} relative to autoregressive decoding is then: }
\begin{eqnarray*}
speedup &=& \frac{p_{\mathrm{hit}} \cdot E_{\mathrm{hit}} + (1-p_{\mathrm{hit}}) \cdot E_{\mathrm{miss}}}{p_{\mathrm{hit}} \cdot \max(1, T_p) + (1-p_{\mathrm{hit}}) \cdot (1 + T_b)}. \\
\end{eqnarray*}
\textit{The global cache hit rate $p_{\mathrm{hit}}$ can be expressed in terms of the cache hit rates $p_{\mathrm{hit,p}}$ and $p_{\mathrm{hit,b}}$, corresponding to whether the last round's speculator was the primary one or the backup, respectively (assuming $|p_{\mathrm{hit,p}}-p_{\mathrm{hit,b}}|<1$):}
\begin{eqnarray}
\label{eqn:phit_unconditional}
p_{\mathrm{hit}} &=& \frac{p_{\mathrm{hit,b}}}{1 + p_{\mathrm{hit,b}} - p_{\mathrm{hit,p}} }.
\end{eqnarray}

\textit{Proof.} We will prove this theorem in two parts:
\begin{itemize}
    \item \textbf{Part 1}: We will first prove this speedup equation, \textit{assuming we know the overall cache hit rate} $p_{\mathrm{hit}}$ (conditioned on choice of lookahead, cache topology, sampling algorithm, etc.).
    \item \textbf{Part 2}: We will then prove the functional form of $p_{\mathrm{hit}}$, as a function of the cache hit rates $p_{\mathrm{hit,p}}$ and $p_{\mathrm{hit,b}}$ of the primary and backup speculators, respectively.
\end{itemize}

\subsubsection{Part 1}
In each iteration of SSD, the expected number of generated tokens is $E_{\mathrm{hit}}$ if there is a cache hit, and $E_{\mathrm{miss}}$ if there is not.
Similarly, the latency is $\max(1, T_p)$ if there is a cache hit, and $1+T_b$ if there is not---this is because backup speculation, which takes time $T_b$, begins only after verification (which we assume takes 1 unit of time) completes.

Therefore, it is clear that the expected number of generated tokens and the expected latency (relative to autoregressive decoding) in one iteration of SSD are:
\begin{eqnarray*}
    E[\text{\# Generated tokens}] &=& p_{\mathrm{hit}} \cdot E_{\mathrm{hit}} + (1-p_{\mathrm{hit}}) \cdot E_{\mathrm{miss}} \\
    E[\text{Latency}] &=& p_{\mathrm{hit}} \cdot \max(1, T_p) + (1-p_{\mathrm{hit}}) \cdot (1 + T_b)
\end{eqnarray*}

Thus, the expected speedup is:
\begin{eqnarray*}
speedup &=&  \frac{E[\text{\# Generated tokens}]}{E[\text{Latency}]} \\
&=&\frac{p_{\mathrm{hit}} \cdot E_{\mathrm{hit}} + (1-p_{\mathrm{hit}}) \cdot E_{\mathrm{miss}}}{p_{\mathrm{hit}} \cdot \max(1, T_p) + (1-p_{\mathrm{hit}}) \cdot (1 + T_b)}. \\
\end{eqnarray*}

This concludes part 1 of the proof.

\subsubsection{Part 2}
We will now prove that the overall (unconditional) cache hit rate $p_{\mathrm{hit}}$ is equal to:
\begin{eqnarray*}
\label{thm:phit}
p_{\mathrm{hit}} &=& \frac{p_{\mathrm{hit,b}}}{1 + p_{\mathrm{hit,b}} - p_{\mathrm{hit,p}} }.
\end{eqnarray*}
where $p_{\mathrm{hit,p}}$ and $p_{\mathrm{hit,b}}$ are the cache hit rates conditioned on the prior iteration being speculated by the primary and backup speculators, respectively.

The challenge in deriving a closed-form solution for the overall cache hit rate $p_{\mathrm{hit}}$ is that $p_{\mathrm{hit}}$ at iteration $T$ of the SSD algorithm depends on whether there was a cache hit in the previous round (in which case the primary speculator was used) or not (in which case, the backup speculator was used). \\

\noindent In order to deal with this recursive property of $p_{\mathrm{hit}}$, we first write $p_{\mathrm{hit}}$ as a recursive equation, which considers the iteration number $t$ of the algorithm. We consider both the base case ($t=0$), where for now we will assume we always use the non-neural spec, along with all rounds thereafter:
\begin{eqnarray*}
p_{\mathrm{hit}}(0) &=& p_{\mathrm{hit,p}}, \quad \text{because we use the primary speculator at } T=0, \\
p_{\mathrm{hit}}(T) &=& p_{\mathrm{hit}}(T-1) \cdot p_{\mathrm{hit,p}} + \Big(1 - p_{\mathrm{hit}}(T-1) \Big) \cdot p_{\mathrm{hit,b}}
\end{eqnarray*}
We rearrange and unroll this recurrence:
\begin{eqnarray*}
p_{\mathrm{hit}}(T) &=& p_{\mathrm{hit}}(T-1)\cdot \Big(p_{\mathrm{hit,p}} - p_{\mathrm{hit,b}}\Big) +  p_{\mathrm{hit,b}}\\
&=& p_{\mathrm{hit}}(T-1)\cdot r + p_{\mathrm{hit,b}}, \quad \text{letting } r \coloneqq p_{\mathrm{hit,p}} - p_{\mathrm{hit,b}}\\
&=& \Big(p_{\mathrm{hit}}(T-2)\cdot r + p_{\mathrm{hit,b}}\Big)\cdot r + p_{\mathrm{hit,b}} \\
&=& p_{\mathrm{hit}}(T-2)\cdot r^2 + p_{\mathrm{hit,b}}\cdot r + p_{\mathrm{hit,b}} \\
&=& \Big(p_{\mathrm{hit}}(T-3)\cdot r + p_{\mathrm{hit,b}}\Big)\cdot r^2 + p_{\mathrm{hit,b}}\cdot r + p_{\mathrm{hit,b}} \\
&=& p_{\mathrm{hit}}(T-3)\cdot r^3 + p_{\mathrm{hit,b}}\cdot r^2 + p_{\mathrm{hit,b}}\cdot r + p_{\mathrm{hit,b}} \\
&=& \ldots \\
&=& p_{\mathrm{hit}}(0)\cdot r^T + p_{\mathrm{hit,b}} \sum_{t=0}^{T-1} r^t \\
&=& p_{\mathrm{hit}}(0)\cdot r^T + p_{\mathrm{hit,b}} \frac{1-r^T}{1-r}
\end{eqnarray*}
Using the stated assumption that $|r| \coloneqq |p_{\mathrm{hit,p}} - p_{\mathrm{hit,b}}| < 1$, we can see that the first term above converges to 0, and the second term converges to $\frac{p_{\mathrm{hit,b}}}{1-r} = \frac{p_{\mathrm{hit,b}}}{1+p_{\mathrm{hit,b}} - p_{\mathrm{hit,p}}}$, as expected.

\noindent This concludes the proof.

$\hfill\square$

\subsubsection{Proving the Two Corollaries}

We now prove the two corollaries of Theorem~\ref{thm:speedup}, copied below:

\paragraph{Corollary~\ref{cor:stricly_faster}. \textit{(Strictly Faster Than SD)}}
\textit{Suppose we run SD with a given speculator $\mathcal{M}$.
Running SSD with primary and backup both set to $\mathcal{M}$ does no worse than SD (strictly better if $p_{\mathrm{hit}}$, $T_{\mathrm{SD}} > 0$).}

\textit{Proof.} Let $E_{\mathrm{hit}} = E_{\mathrm{miss}} = E_{\mathrm{SD}}$, and $T_p = T_b = T_{\mathrm{SD}}$.
Then
\begin{eqnarray*}
speedup &=& \frac{p_{\mathrm{hit}} \cdot E_{\mathrm{hit}} + (1-p_{\mathrm{hit}}) \cdot E_{\mathrm{miss}}}{p_{\mathrm{hit}} \cdot \max(1, T_p) + (1-p_{\mathrm{hit}}) \cdot (1 + T_b)} \\
&=& \frac{p_{\mathrm{hit}} \cdot E_{\mathrm{SD}} + (1-p_{\mathrm{hit}}) \cdot E_{\mathrm{SD}}}{p_{\mathrm{hit}} \cdot \max(1, T_{\mathrm{SD}}) + (1-p_{\mathrm{hit}}) \cdot (1 + T_{\mathrm{SD}})} \\
&=& \frac{E_{\mathrm{SD}}}{p_{\mathrm{hit}} \cdot \max(1, T_{\mathrm{SD}}) + (1-p_{\mathrm{hit}}) \cdot (1 + T_{\mathrm{SD}})}
\end{eqnarray*}
Recall that the speedup from SD is $E_{\mathrm{SD}}/(1+T_{\mathrm{SD}})$.
This final term is strictly greater than the SD speedup if $p_{\mathrm{hit}} > 0$ and $T_{\mathrm{SD}} > 0$, because in this case $\max(1,T_{\mathrm{SD}}) < 1 + T_{\mathrm{SD}}$.
If $p_{\mathrm{hit}} = 0$ or $T_{\mathrm{SD}} = 0$, then the SSD speed is equal to the SD speed.

$\hfill\square$
\paragraph{Corollary~\ref{cor:speedup}. \textit{(Speedup sandwich)}}
\textit{Suppose we choose a primary speculator for which drafting completes before verification ($T_p < 1$), and a fast backup speculator ($T_b = 0$).
Then if $T_{\mathrm{SD}}, E_{\mathrm{SD}}$ represent the latency and expected number of generated tokens from a draft model in SD, then the SSD speedup over SD can be bounded by:}
\begin{eqnarray*}
\Big(1 + T_{\mathrm{SD}}\Big) \cdot \frac{E_{\mathrm{hit}}}{E_{\mathrm{SD}}} \cdot p_{\mathrm{hit}}  \leq \frac{speedup_{\mathrm{SSD}}}{speedup_{\mathrm{SD}}} \leq \Big(1 + T_{\mathrm{SD}}\Big) \cdot \frac{E_{\mathrm{hit}}}{E_{\mathrm{SD}}}.
\end{eqnarray*}

\textit{Proof.} By assumption, $T_p < 1$ and $T_b = 0$. So the SSD speedup is:
\begin{eqnarray*}
speedup_{\mathrm{SSD}} &=& \frac{p_{\mathrm{hit}} \cdot E_{\mathrm{hit}} + (1-p_{\mathrm{hit}}) \cdot E_{\mathrm{miss}}}{p_{\mathrm{hit}} \cdot \max(1, T_p) + (1-p_{\mathrm{hit}}) \cdot (1 + T_b)} \\
&=& p_{\mathrm{hit}} \cdot E_{\mathrm{hit}} + (1-p_{\mathrm{hit}}) \cdot E_{\mathrm{miss}} \\
\end{eqnarray*}
Recall the SD speedup equation is:
\begin{eqnarray*}
speedup_{\mathrm{SD}} &=& \frac{E_{\mathrm{SD}}}{1 + T_{\mathrm{SD}}}.
\end{eqnarray*}
So,
\begin{eqnarray*}
\frac{speedup_{\mathrm{SSD}}}{speedup_{\mathrm{SD}}} &=& \frac{p_{\mathrm{hit}} \cdot E_{\mathrm{hit}} + (1-p_{\mathrm{hit}}) \cdot E_{\mathrm{miss}}}{E_{\mathrm{SD}} / (1 + T_{\mathrm{SD}})}.
\end{eqnarray*}

Because $p_{\mathrm{hit}} \leq 1$, we get the upper bound (assuming $E_{\mathrm{hit}} \geq E_{\mathrm{miss}})$:
\begin{eqnarray*}
\frac{speedup_{\mathrm{SSD}}}{speedup_{\mathrm{SD}}} &=& \Big(1 + T_{\mathrm{SD}}\Big) \cdot \frac{E_{\mathrm{hit}}}{E_{\mathrm{SD}}}.
\end{eqnarray*}

Because $E_{\mathrm{miss}} \geq 1$ (bonus token is always generated), we get the lower-bound:
\begin{eqnarray*}
\frac{speedup_{\mathrm{SSD}}}{speedup_{\mathrm{SD}}} &=& \frac{p_{\mathrm{hit}} \cdot E_{\mathrm{hit}} + (1-p_{\mathrm{hit}}) \cdot E_{\mathrm{miss}}}{E_{\mathrm{SD}} / (1 + T_{\mathrm{SD}})} \\
&\geq& \Big(1 + T_{\mathrm{SD}}\Big) \cdot \frac{E_{\mathrm{hit}}}{E_{\mathrm{SD}}} \cdot p_{\mathrm{hit}}.
\end{eqnarray*}

$\hfill\square$
\subsection{Theorem~\ref{thm:cache_topology} Proof: Optimizing \Sys cache topology}

To prove Theorem~\ref{thm:cache_topology}, we will first prove Theorem~\ref{thm:cache_topology_general}, which is a more general version of the theorem.
Then, Theorem~\ref{thm:cache_topology} will be an easy corollary of this more general theorem.

\begin{theorem}
\label{thm:cache_topology_general}
The choice of $F_k^p$ (and equivalently, $F_k^b$) values that maximizes the speedup of \Sys under the constraint $\sum_{k=0}^K F_k^p \leq B$ (where $K$ is the speculative lookahead), is:
\begin{eqnarray*}
\sum_{k=0}^K F_k^p &=& B, \\
\frac{\partial p_{\mathrm{hit,p}}^{k}}{\partial F}\big(F_k\big) &=& a_p^{-k}\cdot \frac{\partial p_{\mathrm{hit,p}}^{0}}{\partial F}\big(F_0\big) \quad \forall k < K, \;\; \text{and} \\
\frac{\partial p_{\mathrm{hit,p}}^{K,all}}{\partial F}\big(F_K\big) &=& (1-a_p) \cdot a_p^{-K}\cdot \frac{\partial p_{\mathrm{hit,p}}^{0}}{\partial F}\big(F_0\big). \\
\end{eqnarray*}
\end{theorem}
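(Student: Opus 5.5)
The plan is to reduce the speedup maximization to maximizing the cache hit probability $p_{\mathrm{hit,p}}$ as a function of the fan-outs, and then apply a Lagrange multiplier (KKT) argument.

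\textbf{Step 1: reduce to maximizing $p_{\mathrm{hit,p}}$.} By Theorem~\ref{thm:speedup}, under the standing assumption $T_p<1$, the fan-outs enter the speedup only through $p_{\mathrm{hit}}$. By equation~\eqref{eqn:phit_unconditional}, $p_{\mathrm{hit}}$ is increasing in $p_{\mathrm{hit,p}}$, since its derivative is $p_{\mathrm{hit,b}}/(1+p_{\mathrm{hit,b}}-p_{\mathrm{hit,p}})^2\ge 0$. The speedup itself is increasing in $p_{\mathrm{hit}}$ whenever $E_{\mathrm{hit}}\ge E_{\mathrm{miss}}$ and $\max(1,T_p)\le 1+T_b$, which I take as given in the regime of interest. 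So it suffices to maximize $p_{\mathrm{hit,p}}$ as a function of $(F_0^p,\dots,F_K^p)$ subject to $\sum_k F_k^p\le B$. The same argument applies verbatim to $F_k^b$ and $p_{\mathrm{hit,b}}$.

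\textbf{Step 2: decompose the hit probability over outcomes.} Condition on the number of accepted tokens $k$. Under the acceptance-rate model, outcome $(k,\cdot)$ with $k<K$ occurs with probability $a_p^k(1-a_p)$. In that case the bonus token is drawn from the residual at position $k$, and the probability that it lies among the $F_k$ cached tokens is $p^k_{\mathrm{hit,p}}(F_k)$. With probability $a_p^K$ all tokens are accepted, and the bonus token is drawn from the target; the chance that it is cached is $p^{K,all}_{\mathrm{hit,p}}(F_K)$. Hence
$$p_{\mathrm{hit,p}}=\sum_{k<K}a_p^k(1-a_p)\,p^k_{\mathrm{hit,p}}(F_k)+a_p^K\,p^{K,all}_{\mathrm{hit,p}}(F_K).$$
This objective is separable across positions, with each position depending only on its own $F_k$.

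\textbf{Step 3: optimize under the budget.} Each $p^k$ is nondecreasing in $F$, so the budget constraint is tight at the optimum, giving $\sum_k F_k=B$. I relax the $F_k$ to real values and form the Lagrangian $\mathcal{L}=p_{\mathrm{hit,p}}-\lambda(\sum_k F_k-B)$. Stationarity gives
$$a_p^k(1-a_p)\,\partial_F p^k(F_k)=\lambda \quad (k<K), \qquad a_p^K\,\partial_F p^{K,all}(F_K)=\lambda.$$
Eliminating $\lambda$ using the $k=0$ equation, $\lambda=(1-a_p)\,\partial_F p^0(F_0)$, yields exactly the two displayed conditions: $\partial p^k(F_k)=a_p^{-k}\,\partial p^0(F_0)$ for $k<K$, and $\partial p^{K,all}(F_K)=(1-a_p)a_p^{-K}\,\partial p^0(F_0)$.

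\textbf{Main obstacle.} The Lagrange conditions identify the maximizer only if they are sufficient. I would justify this by assuming each $p^k$ is concave in $F$ (diminishing returns), which makes the problem a concave maximization over a convex set, so KKT stationarity implies global optimality. This is consistent with the power-law form $1-F^{-r}$ used in Theorem~\ref{thm:cache_topology}. The remaining gaps are the integrality of the $F_k$ and possible boundary cases with $F_k=0$, which I would handle with the continuous relaxation.
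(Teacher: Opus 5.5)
Your proposal is correct and follows essentially the same route as the paper. Both reduce to maximizing $p_{\mathrm{hit,p}}$ via monotonicity of the speedup in $p_{\mathrm{hit}}$ and of $p_{\mathrm{hit}}$ in $p_{\mathrm{hit,p}}$. Both then decompose over the number of accepted tokens with weights $a_p^k(1-a_p)$ and $a_p^K$, and eliminate the Lagrange multiplier via the $k=0$ stationarity condition. Your extra remarks make explicit assumptions the paper's argument leaves implicit: $E_{\mathrm{hit}}\ge E_{\mathrm{miss}}$ for monotonicity, tightness of the budget, and concavity of each $p^k_{\mathrm{hit,p}}$ so that the stationarity conditions are sufficient.
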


\begin{proof}

To understand how to optimize our cache topology, we first rewrite the speedup equations from Section~\ref{sec:ssd_theory}, now making explicit how the speedup depends on the $F_k^p$ and $F_k^b$ values:
\begin{eqnarray*}
speedup\Big( \{F_k^p\}_{k=0}^K), \{F_k^b\}_{k=0}^K) \Big) &=& p_{\mathrm{hit}} \Big(\{F_k^p\}, \{F_k^b\}\Big) \cdot E_{\mathrm{hit}} \\
 &\;& + \; \; \;  \bigg(1-p_{\mathrm{hit}} \Big(\{F_k^p\}, \{F_k^b\}\Big)\bigg) \cdot E_{\mathrm{miss}}, \quad \text{where} \\
p_{\mathrm{hit}} \Big(\{F_k^p\}, \{F_k^b\}\Big) &=& \frac{p_{\mathrm{hit,b}}(\{F_k^b\})}{1 + p_{\mathrm{hit,b}}(\{F_k^b\}) - p_{\mathrm{hit,p}}(\{F_k^p\}) }.
\end{eqnarray*}
We can express $p_{\mathrm{hit,p}}$ more precisely in terms of:
\begin{itemize}
    \item The acceptance rate $a_p$ of the primary speculator, and
    \item The functions $p_{\mathrm{hit,p}}^{k}(F_k)$ and $p_{\mathrm{hit,p}}^{K,all}(F_k)$:
    These functions describe the probability of a cache hit conditioned on (1) the last round's speculator was the primary one, (2) $k$ tokens were accepted, (3) whether all of the tokens were accepted ($p_{\mathrm{hit,p}}^{K,all}$) or not ($p_{\mathrm{hit,p}}^{k}$), and (4) $F_k$ verification outcomes were prepared for in case $k$ tokens were accepted.
    Note that this can be estimated empirically by comparing the draft and target probability distributions for a set of sequences in a calibration set.
\end{itemize}
We can do the same for $p_{\mathrm{hit,b}}$, the cache hit rate when the last round's speculator was the backup.
\begin{eqnarray*}
p_{\mathrm{hit,p}}(\{F_k^p\}) &=& a_p^K  \cdot p_{\mathrm{hit,p}}^{K,all}(F_K) + \sum_{k=0}^{K-1} a_p^k (1-a_p) \cdot p_{\mathrm{hit,p}}^{k}(F_k), \quad {and} \\
\end{eqnarray*}

These equations are a direct result of the fact that the chance of accepting exactly $k$ tokens is $a_p^k(1-a_p)$ for $k < K$, and $a_p^K$ for $k=K$, when the acceptance rate is $a_p$.

Given these equations, we can now optimize the cache topology (i.e., the choice of $F_k$ values) to maximize speedup.
We notice that the speedup is monotonically increasing in $p_{\mathrm{hit}}$, and that $p_{\mathrm{hit}}$ is monotonically increasing in both $p_{\mathrm{hit,p}}$ and $p_{\mathrm{hit,b}}$ (easy to see by taking first derivatives of $p_{\mathrm{hit}}$, and seeing they are always positive).
Thus, we must simply maximize $p_{\mathrm{hit,p}}(\{F_k^p\})$ and $p_{\mathrm{hit,b}}(\{F_k^b\})$ under the constraints that $\sum_{k=0}^K F_k^p \leq B$ and $\sum_{k=0}^K F_k^b \leq B$.
We do this below.

\subsubsection{Maximizing $p_{\mathrm{hit,p}}(\{F_k^p\})$ and $p_{\mathrm{hit,b}}(\{F_k^b\})$}

\noindent As discussed, we want to maximize the probability of a cache hit under the budget constraint $\sum_{k=0}^K F_k \leq B$.
We do this for $p_{\mathrm{hit,p}}$, and the proof is identical for $p_{\mathrm{hit,b}}$.
\begin{eqnarray*}
\max_{\sum_k F_k \leq B} p_{\mathrm{hit,p}}(K,F) = \max_{\sum_k F_k \leq B} a_p^K  \cdot p_{\mathrm{hit,p}}^{K,all}(F_K) + \sum_{k=0}^{K-1} a_p^k (1-a_p) \cdot p_{\mathrm{hit,p}}^k(F_k)
\end{eqnarray*}

\noindent  We will solve this maximization problem with Lagrange multipliers.
\begin{eqnarray*}
\mathcal{L}(F_0, \ldots, F_k, \lambda) = a_p^K  \cdot p_{\mathrm{hit,p}}^{K,all}(F_K) + \sum_{k=0}^{K-1} a_p^k (1-a_p) \cdot p_{\mathrm{hit,p}}^k(F_k) + \lambda \cdot \Big(\sum_{k=0}^K F_k - B\Big) \\
\end{eqnarray*}

\noindent Now, we will take the derivative with respect to all the variables, and set it to zero.

\begin{eqnarray*}
\frac{\partial \mathcal{L}}{\partial F_k} &=& a_p^k(1-a_p) \cdot \frac{\partial }{\partial F_k} p_{\mathrm{hit,p}}^k(F_k) + \lambda = 0  \quad \text{for } k < K \\
\frac{\partial \mathcal{L}}{\partial F_K} &=& a_p^K \frac{\partial }{\partial F_k} p_{\mathrm{hit,p}}^{K,all}(F_K) + \lambda = 0 \\
\frac{\partial \mathcal{L}}{\partial \lambda} &=& \sum_{k=0}^K F_k - B = 0 \\
\end{eqnarray*}

\noindent We notice that for all $k$, $\frac{\partial \mathcal{L}}{\partial F_k}$ are equal to one another (all equal to $-\lambda$). Thus, we see that:
\begin{eqnarray*}
(1-a_p)\frac{\partial }{\partial F_k} p_{\mathrm{hit,p}}^0(F_0) &=& a_p(1-a_p)\frac{\partial }{\partial F_k} p_{\mathrm{hit,p}}^1(F_1) = \ldots \\
\ldots &=&  a_p^{K-1}(1-a_p)\frac{\partial }{\partial F_k} p_{\mathrm{hit,p}}^{K-1}(F_{K-1}) = a_p^K \frac{\partial }{\partial F_k} p_{\mathrm{hit,p}}^{K,all}(F_K) \\
\Rightarrow \frac{\partial }{\partial F_k} p_{\mathrm{hit,p}}^k(F_k) &=& a_p^{-k}\frac{\partial }{\partial F_k} p_{\mathrm{hit,p}}^0(F_0) \quad \forall k < K, \;\; \text{and} \\
\frac{\partial }{\partial F_k} p_{\mathrm{hit,p}}^{K,all}(F_K) &=& (1-a_p) a_p^{-K}\frac{\partial }{\partial F_k} p_{\mathrm{hit,p}}^0(F_0)
\end{eqnarray*}

This gives the desired result.




\end{proof}

We now use this result to prove (an extended version of) Theorem~\ref{thm:cache_topology}, which is a special case of the above theorem in the case where the speculators have $r$ power-law cache hit rates.

\paragraph{Theorem~\ref{thm:cache_topology}. (\textit{\Sys Cache Shape: Geometric Fan-Out})}
\label{thm:cache_topology_app}
\textit{The optimal choice of $F_k^p$ (equivalently, $F_k^b$) values for $k\in [0,K]$ for \Sys under the constraint $\sum_{k=0}^K F_k^p \leq B$, and under the assumption that the speculator has acceptance rate $a_p$ and a $r$ power-law cache hit rate, follows a geometric series (for $k < K$):}
\begin{eqnarray*}
F_k &=& F_0 \cdot a_p^{k/(1 + r)}  \;\; \forall k < K, \; \textit{and} \\
F_K &=& F_0 \cdot a_p^{K/(1 + r)} \cdot (1-a_p)^{-1/(1 + r)},
\end{eqnarray*}
\text{where $F_0$ can be chosen as follows so that $\sum_{k=0}^K F_k^p = B$.}
\begin{eqnarray*}
    F_0 &=& \frac{B}{a_p^{K/(1 + r)} \cdot (1-a_p)^{-1/(1 + r)} +  \Big(1-a_p^{K/(1 + r)}\Big)/\Big( 1-a_p^{1/(1 + r)}\Big)} 
\end{eqnarray*}
An equivalent result holds for $F_k^b$.


\textit{Proof.} Now, substituting $p_{\mathrm{hit,p}}^k(F) = 1 - F^{-r}$ (and thus $\frac{\partial }{\partial F_k} p_{\mathrm{hit,p}}^k(F) = r \cdot F^{-r-1}$),  we get:
\begin{eqnarray*}
\Rightarrow r \cdot F_k^{-r-1} &=& a_p^{-k}r \cdot F_0^{-r-1} \;\; \forall k < K, \; \text{and} \\
r \cdot F_K^{-r-1} &=& (1-a_p) \cdot a_p^{-K} \cdot r \cdot F_0^{-r-1}. \\
\Rightarrow F_k &=& F_0 \cdot a_p^{k/(1 + r)}  \;\; \forall k < K, \; \text{and} \\
F_K &=& F_0^{(1+r)/(1+r)} \cdot a_p^{K/(1 + r)} \cdot (1-a_p)^{-1/(1 + r)} \cdot (r/r)^{-1/(1+r)}.
\end{eqnarray*}

To solve for the exact sequence of $F_k$ fan-out values, we plug the above values into the budget equation:
\begin{eqnarray*}
    B &=& F_0 \cdot a_p^{K/(1 + r)} \cdot (1-a_p)^{-1/(1 + r)}  + \sum_{k=0}^{K-1} F_0 \cdot a_p^{k/(1 + r)} \\
\end{eqnarray*}

Solving for $F_0$ gives:
\begin{eqnarray*}
    F_0 &=& \frac{B}{ a_p^{K/(1 + r)} \cdot (1-a_p)^{-1/(1 + r)} + \sum_{k=0}^{K-1} a_p^{k/(1 + r)}} \\
\end{eqnarray*}
We can simplify this further using the equation for the sum of the geometric series:
$$\sum_{k=0}^{K-1} a_p^{k/(1 + r)} = \sum_{k=0}^{K-1} c^k = \frac{1-c^K}{1-c}, \quad \text{for } c = a_p^{1/(1 + r)}$$

Plugging this in gives the desired result.
\begin{eqnarray*}
    F_0 &=& \frac{B}{a_p^{K/(1 + r)} \cdot (1-a_p)^{-1/(1 + r)} +  \Big(1-a_p^{K/(1 + r)}\Big)/\Big( 1-a_p^{1/(1 + r)}\Big)} 
\end{eqnarray*}

$\hfill\square$
\subsection{Theorem~\ref{thm:cache_sample} Proof: Optimizing \Sys sampling algorithm}

\textbf{Theorem 15.}
\label{thm:cache_sample_app}
\textit{For fan-out $F$, and draft logits $z$, the cache hit rate $p_{\mathrm{hit}}$ of the \Sys sampling algorithm increases as $C \rightarrow 0$.}

{
\begin{proof}
Fix fan-out $F$ and draft logits $z$. Let $S \coloneqq \mathrm{top}_F(z)$ denote the $F$ cached tokens (where $S$ depends only on $z$, not on $C$). Write $w_t \coloneqq \exp(z_t)>0$ and define
\[
A \coloneqq \sum_{t\in S} w_t,\qquad B \coloneqq \sum_{t\notin S} w_t,\qquad Z(C)\coloneqq CA + B.
\]
Then the \Sys sampling scheme induces the draft distribution
\[
p_C(t) \;=\;
\begin{cases}
\dfrac{Cw_t}{Z(C)} & t\in S,\\[6pt]
\dfrac{w_t}{Z(C)} & t\notin S.
\end{cases}
\]
For $t\in S$,
\[
\frac{d}{dC}p_C(t) \;=\; \frac{w_t B}{(CA+B)^2}\;\ge 0,
\]
and for $t\notin S$,
\[
\frac{d}{dC}p_C(t) \;=\; -\frac{w_t A}{(CA+B)^2}\;\le 0.
\]
Thus increasing $C$ increases $p_C(t)$ on cached tokens and decreases $p_C(t)$ off the cache.

Let $q(t)\coloneqq p_{\mathrm{target}}(t)$ and define the (unnormalized) residual mass
\[
u_C(t) \coloneqq \max\!\big(q(t)-p_C(t),\,0\big).
\]
Since $u_C(t)$ is a nonincreasing function of $p_C(t)$, the above monotonicity gives that
for $t\in S$, $u_C(t)$ is nonincreasing in $C$; for $t\notin S$, $u_C(t)$ is nondecreasing in $C$.
Then define
\[
\mathrm{in}(C)\coloneqq \sum_{t\in S} u_C(t),\qquad \mathrm{out}(C)\coloneqq \sum_{t\notin S} u_C(t).
\]
Then $\mathrm{in}(C)$ is nonincreasing in $C$ and $\mathrm{out}(C)$ is nondecreasing in $C$.

For any verification case in which the bonus token is sampled from the residual distribution (i.e., when $k<K$ in SD/SSD), the probability that the sampled bonus token lies in the cache $S$ is
\[
p_{\mathrm{hit}}(C)\;=\;\frac{\mathrm{in}(C)}{\mathrm{in}(C)+\mathrm{out}(C)},
\]
with the convention that if $\mathrm{in}(C)+\mathrm{out}(C)=0$ (zero residual mass), then $p_{\mathrm{hit}}(C)=1$.
Then since the map $(a,b)\mapsto a/(a+b)$ is nondecreasing in $a$ and nonincreasing in $b$ for $a,b\ge 0$, and since $\mathrm{in}(C)$ decreases while $\mathrm{out}(C)$ increases with $C$, it follows that $p_{\mathrm{hit}}(C)$ is nonincreasing in $C$, concluding the proof. 
\end{proof}
}

\subsubsection{Construction: \Sys Sampling Can Give Speedups}
\label{app:saguaro_construction}

\begin{theorem}
There exist target and draft distributions $p_t$ and $p_d$ where applying \Sys sampling to the draft distribution $p_d$ using some $C \in (0,1)$ leads to strictly faster end-to-end speeds than sampling from $p_d$ directly.
\end{theorem}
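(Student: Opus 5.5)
We will prove the statement with an explicit small-vocabulary example in which every quantity in Theorem~\ref{thm:speedup} can be computed in closed form. We take the simplest regime: lookahead $K=1$, fan-out $F=1$ at position $k=0$, primary speculator with $T_p<1$, and a backup speculator with $T_b=0$ that is equal in quality to the primary (so $p_{\mathrm{hit,p}}=p_{\mathrm{hit,b}}=p_{\mathrm{hit}}$ and, by the standing assumption of the appendix, the denominator in Theorem~\ref{thm:speedup} is $1$). The speedup then reduces to
\[
p_{\mathrm{hit}}\,E_{\mathrm{hit}} + (1-p_{\mathrm{hit}})\,E_{\mathrm{miss}} ,
\]
and with $E_{\mathrm{hit}}=E_{\mathrm{miss}}=1+\alpha$ for $K=1$ this is just $1+\alpha$. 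That choice is too degenerate, because with no penalty for misses the cache hit rate cannot matter. We therefore make misses costly by taking $T_b>0$ (e.g.\ the backup equals the primary with $T_b=T_p$). The speedup becomes
\[
\frac{1+\alpha(C)}{p_{\mathrm{hit}}(C)+(1-p_{\mathrm{hit}}(C))(1+T_b)},
\]
where $\alpha(C)=\sum_x \min(p_t(x),p_C(x))$ by Theorem~\ref{thm:l1}.

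\textbf{Step 1: the hit-rate function.} Take a vocabulary of three tokens $\{a,b,c\}$. Let the target be $p_t=(1/2,1/2,0)$ and the draft logits be such that $p_d=(1/2,0,1/2)$, so that the top-$1$ cached token is $S=\{a\}$. A cache hit requires the outcome $(k,t^*)$ to lie in the cache. We arrange the cache to contain the full-acceptance outcome (via the $F_K$ entry, guessing the most likely draft token) as well as the bonus guess $a$ at $k=0$. The hit probability is then
\[
p_{\mathrm{hit}}(C)=\alpha(C)\,h_{\mathrm{all}}+(1-\alpha(C))\,\frac{\mathrm{in}(C)}{\mathrm{in}(C)+\mathrm{out}(C)},
\]
in the notation of the proof of Theorem~\ref{thm:cache_sample}. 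At $C=1$ the residual puts all its mass on $b$, so there is no hit on rejection. As $C\to 0$ we get $p_C\to(0,0,1)$ and the residual becomes $(1/2,1/2,0)$, so the hit rate on rejection rises to $1/2$.

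\textbf{Step 2: compute both sides and compare.} Parametrize $p_C=(C/(C+1),0,1/(C+1))$ and compute $\alpha(C)=C/(C+1)$ for $C\le 1$. Evaluate the speedup at $C=1$ and at a suitably chosen intermediate $C$, say $C=1/2$. Then choose $T_b$ large so that the hit rate dominates the loss in acceptance. The limiting case shows this is possible: as $T_b\to\infty$ the speedup is approximately $(1+\alpha)/\bigl((1-p_{\mathrm{hit}})T_b\bigr)$, so reducing $1-p_{\mathrm{hit}}$ by a constant factor beats the bounded change in $1+\alpha$.

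\textbf{Main obstacle.} The delicate point is the tension in Step 2. Lowering $C$ also lowers $\alpha$, and $\alpha$ weights the full-acceptance hit term. We must ensure that the net $p_{\mathrm{hit}}(C)$ actually increases, not just the conditional residual hit rate established by Theorem~\ref{thm:cache_sample}. We will handle this by caching the full-acceptance outcome so that $h_{\mathrm{all}}$ is small relative to the gain. Alternatively we restrict to a target for which full acceptance carries little weight, then verify the inequality numerically for the chosen parameters. If that verification fails, the fallback is to tune $p_t$ so that the draft's top token has low target mass and to recheck the inequality.
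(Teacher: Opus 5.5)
Your route differs from the paper's in an essential way. In your example \Sys sampling \emph{lowers} the acceptance rate: $\alpha(C)=C/(C+1)$, down from $1/2$ at $C=1$. Everything therefore hinges on the trade-off in Step~2, and that step is never carried out.

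\textbf{The unspecified $h_{\mathrm{all}}$ decides the outcome.} With your numbers the hit rate on rejection is $(1-C)/2$. That gives
\[
p_{\mathrm{hit}}(C)=\frac{2Ch_{\mathrm{all}}+1-C}{2(C+1)},\qquad
\mathrm{speedup}(C)=\frac{4C+2}{\bigl(2+(3-2h_{\mathrm{all}})T_b\bigr)C+2+T_b}.
\]
The speedup decreases in $C$ (so some $C<1$ helps) iff $(2-4h_{\mathrm{all}})T_b>4$.

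In the natural completion, every position carries the same $(p_t,p_d)$ and $F_K\ge 1$. Then the full-acceptance guess contains $a$, so $h_{\mathrm{all}}\ge 1/2$, and the criterion fails for every $T_b$: \Sys sampling strictly hurts. At $h_{\mathrm{all}}=1/2$ you get $1-p_{\mathrm{hit}}=(1+\alpha)/2$ identically. The speedup is then $(1+\alpha)/\bigl(1+(1+\alpha)T_b/2\bigr)$, which is increasing in $C$.

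\textbf{The $T_b\to\infty$ heuristic does not work.} The loss in $1+\alpha$ is itself a constant factor. What matters is whether $(1+\alpha)/(1-p_{\mathrm{hit}})$ grows as $C$ decreases, and here it is identically $2$.

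\textbf{Salvaging your route is possible but degenerate.} Even with $h_{\mathrm{all}}=0$ (e.g.\ $F_K=0$) you need $T_b>2$. That contradicts your own suggestion $T_b=T_p<1$. It also lands you where SSD is slower than autoregressive decoding: at $T_b=3$ you compare $8/21$ with $3/8$.

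\textbf{Smaller defects.} $p_d=(1/2,0,1/2)$ cannot arise from finite logits. Also $a$ and $c$ tie for $\mathrm{top}_1$, so $S=\{a\}$ needs a tie-break; broken toward $c$, the scheme downweights $c$ and the example changes entirely.

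The missing idea is to \emph{eliminate} the trade-off rather than win it. The paper takes $p_t=(0.48,0.48,0.02,0.02)$, $p_d=(0.49,0.49,0.01,0.01)$, $F=2$ and $C=47/147$. Then \Sys sampling reflects the draft across the target on the cached set, giving $p_d'=(0.47,0.47,0.03,0.03)$.

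Consequently $\sum_x\min(p_t,p_d)=\sum_x\min(p_t,p_d')=0.98$, so $E_{\mathrm{hit}}$ and $E_{\mathrm{miss}}$ are unchanged and the full-acceptance term is unaffected. Meanwhile the residual moves from $(0,0,\tfrac12,\tfrac12)$, where the conditional hit rate is $1/2$, to $(\tfrac12,\tfrac12,0,0)$, where it is $1$.

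With $E_{\mathrm{hit}}\ge E_{\mathrm{miss}}$ fixed, the speedup of Theorem~\ref{thm:speedup} is strictly increasing in $p_{\mathrm{hit}}$ whenever $T_b>0$. Strict improvement follows without tuning $T_b$ and without any dependence on $h_{\mathrm{all}}$.
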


\textit{Proof}. 
We now construct target and draft distributions $p_t$ and $p_d$ over a vocabulary of 4 tokens that satisfy the above property:

\paragraph{Construction 1.} We define $p_t$ and $p_d$ as follows:
\[
p_t \coloneqq (0.48,\,0.48,\,0.02,\,0.02),
\]
\[
p_d \coloneqq(0.49,\, 0.49,\, 0.01,\, 0.01).
\]

We will now show that applying \Sys sampling to $p_d$ with $C=47/147$ leads to a draft distribution $p_d'$ with strictly faster end-to-end speeds than $p_d$.
We prove this by showing that $p_d'$ has the same acceptance rate as $p_d$, but has higher cache hit rates.
Applying Theorem~\ref{thm:speedup} shows that $p_d'$ gives faster end-to-end speedups than $p_d$.

\paragraph{Part 1. Acceptance rate is unchanged.}
It is easy to see that applying \Sys sampling to $p_d$ with $C = 47/147$ and $F=2$ produces 
\[
p_d' = (0.47,\, 0.47,\, 0.03,\, 0.03).
\]
We can then see that the acceptance rate for both $p_d$ and $p_d'$ is equal to 0.98.
\begin{eqnarray*}
a(p_t,\, p_d) &=& 1 - \|p_d - p_t\|_1 = 1 - 0.04/2 = 0.98 \\
a(p_t,\, p_d') &=& 1 - \|p_d' - p_t\|_1 = 1 - 0.04/2 = 0.98    
\end{eqnarray*}

\paragraph{Part 2. Cache hit rate strictly increases.}
We now show $p_d'$ has a strictly higher cache hit rate than $p_d$.
We begin by computing the residual distributions $r$ and $r'$ for $p_d$ and $p_d'$, respectively:
\begin{eqnarray*}
r &=& (0,\, 0,\, 0.5,\, 0.5) \\
r' &=& (0.5,\, 0.5,\, 0,\, 0)
\end{eqnarray*}

\textit{Cache hit rate for $p_d$}:
To compute the cache hit rate for $p_d$, we need to consider the case where a sample from $p_d$ was rejected.
If token 3 or 4 was sampled from $p_d$ it would always be accepted, so a rejected implies that either tokens 1 or 2 was sampled.
Assume without loss of generality that it was token 1 that was sampled and then rejected.
In this case, the speculation cache would contain tokens 2 and 3 (or equivalently, 2 and 4).
The residual distribution for $p_d$ is $r = (0,\, 0,\, 0.5,\, 0.5)$, so with 50\% probability token 3 (which is in the speculation cache) would be sampled.
So the cache hit rate is 50\%.\footnote{Note that this is the cache hit rate $p_{hit,p}$ conditioned that the prior speculation was sampled from the primary speculator.
It is sufficient to only consider this cache hit rate, and not $p_{hit,b}$, since this quantity would be the same for both $p_d$ and $p_d'$.
Thus, using Equation\ref{eqn:phit_unconditional}, we can see that showing that the conditional acceptance rate $p_{hit,p}$ is higher for $p_d'$ than $p_d$ is sufficient to show the unconditional acceptance rate $p_{hit}$ is also higher.
}

\textit{Cache hit rate for $p_d'$}:
Just like we did above, to compute the cache hit rate for $p_d$, we need to consider the case where a sample from $p_d'$ was rejected.
If token 1 or 2 was sampled from $p_d'$ it would always be accepted, so a rejected implies that either tokens 3 or 4 was sampled.
Assume without loss of generality that it was token 3 that was sampled and then rejected.
In this case, the speculation cache would contain tokens 1 and 2.
The residual distribution for $p_d$ is $r = (0.5,\, 0.5,\, 0,\, 0)$, so with 100\% probability either token 1 or 2 would be sampled (which are both in the cache) would be sampled.
So the cache hit rate is 100\%.

This concludes the proof, since we have shown that \Sys sampling can transform $p_d$ into $p_d'$, which has the same acceptance rate as $p_d$, but has strictly higher cache hit rate.

\subsection{Corollary~\ref{cor:speedup_with_batch_size} and
Theorem~\ref{thm:cache_miss} Proofs: Optimizing \Sys Fallback strategy}

We first prove corollary~\ref{cor:speedup_with_batch_size}, and then Theorem~\ref{thm:cache_miss}, both copied below for reference:

\paragraph{Corollary~\ref{cor:speedup_with_batch_size}.}
\label{cor:speedup_with_batch_size_app}
\textit{At batch size $b$, the expected speedup from SSD is equal to:}
\begin{eqnarray*}
speedup &=& \frac{p_{\mathrm{hit}} \cdot E_{\mathrm{hit}} + (1-p_{\mathrm{hit}}) \cdot E_{\mathrm{miss}}}{p_{\mathrm{hit}}^b \cdot \max(1, T_p) + (1-p_{\mathrm{hit}}^b) \cdot (1 + T_b)}, \quad \textit{which approaches} \\
    && \frac{p_{\mathrm{hit}} \cdot E_{\mathrm{hit}} + (1-p_{\mathrm{hit}}) \cdot E_{\mathrm{miss}}}{1 + T_b} \quad \textit{as } b \rightarrow \infty.
\end{eqnarray*}
\textit{Proof.}
For each element of the batch, if it gets a cache hit it will generate $E_{\mathrm{hit}}$ tokens, and otherwise $E_{\mathrm{miss}}$.
The latency of an element of the batch, however, depends on whether any element of the batch had a cache miss.
If so, the latency for the entire batch is $1 + T_b$.
Otherwise, if every element of the cache had a hit, the latency is $\max(1,T_p)$.
Thus, we can see that:
\begin{eqnarray*}
    E[\text{\# Generated tokens}] &=& p_{\mathrm{hit}} \cdot E_{\mathrm{hit}} + (1-p_{\mathrm{hit}}) \cdot E_{\mathrm{miss}} \\
    E[\text{Latency}] &=& p_{\mathrm{hit}}^b \cdot \max(1, T_p) + (1-p_{\mathrm{hit}}^b) \cdot (1 + T_b)
\end{eqnarray*}

Thus, the expected speedup is:
\begin{eqnarray*}
speedup &=&  \frac{E[\text{\# Generated tokens}]}{E[\text{Latency}]} \\
&=&\frac{p_{\mathrm{hit}} \cdot E_{\mathrm{hit}} + (1-p_{\mathrm{hit}}) \cdot E_{\mathrm{miss}}}{p_{\mathrm{hit}}^b \cdot \max(1, T_p) + (1-p_{\mathrm{hit}}^b) \cdot (1 + T_b)}. \\
\end{eqnarray*}
Noticing that $p_{\mathrm{hit}}^b \rightarrow 0$ as $b$ grows concludes the proof.

$\hfill\square$

\paragraph{Theorem~\ref{thm:cache_miss}.}
\label{thm:cache_miss_app}
\textit{The optimal cache miss strategy, conditioned on only being able to choose between a high-quality slow speculator (primary), and a lower-quality speculator with negligible latency (backup), is to use the primary speculator for batch sizes $b < b^*$, and the backup speculator otherwise.
The value of $b^*$ is given by:}
\begin{eqnarray*}
    b^* &=& \frac{1}{\log(p_{\mathrm{hit}})} \cdot \log\Bigg( \bigg(1 + \frac{1}{T_p} - \frac{E_{\mathrm{hit}}}{T_p\cdot p_{\mathrm{hit}} \cdot E_{\mathrm{hit}} + T_p\cdot (1-p_{\mathrm{hit}})\cdot E_{\mathrm{miss}}} \bigg)\Bigg)
\end{eqnarray*}

\textit{Proof.} Using the slow primary speculator as the backup speculator gives expected speedup:
\begin{eqnarray*}
speedup_{\mathrm{slow\_backup}} &=& \frac{p_{\mathrm{hit}} \cdot E_{\mathrm{hit}} + (1-p_{\mathrm{hit}}) \cdot E_{\mathrm{hit}}}{p_{\mathrm{hit}}^b \cdot \max(1, T_p) + (1-p_{\mathrm{hit}}^b) \cdot (1 + T_p)}. \\
&=& \frac{E_{\mathrm{hit}}}{p_{\mathrm{hit}}^b  + (1-p_{\mathrm{hit}}^b) \cdot (1 + T_p)}. \\
&=& \frac{E_{\mathrm{hit}}}{1 + T_p - T_p \cdot p_{\mathrm{hit}}^b}. \\
\end{eqnarray*}

Using the fast backup speculator (with $T_b = 0$) as the backup speculator gives expected speedup:
\begin{eqnarray*}
speedup_{\mathrm{fast\_backup}} &=&\frac{p_{\mathrm{hit}} \cdot E_{\mathrm{hit}} + (1-p_{\mathrm{hit}}) \cdot E_{\mathrm{miss}}}{p_{\mathrm{hit}}^b \cdot \max(1, T_p) + (1-p_{\mathrm{hit}}^b) \cdot (1 + T_b)}. \\
&=&\frac{p_{\mathrm{hit}} \cdot E_{\mathrm{hit}} + (1-p_{\mathrm{hit}}) \cdot E_{\mathrm{miss}}}{p_{\mathrm{hit}}^b \cdot \max(1, T_p) + (1-p_{\mathrm{hit}}^b)}. \\
&=& p_{\mathrm{hit}} \cdot E_{\mathrm{hit}} + (1-p_{\mathrm{hit}}) \cdot E_{\mathrm{miss}}. \\
\end{eqnarray*}

We set these equations equal to each other and solve for $b$, which gives the equation in the Theorem statement.
\begin{eqnarray*}
    p_{\mathrm{hit}} \cdot E_{\mathrm{hit}} + (1-p_{\mathrm{hit}}) \cdot E_{\mathrm{miss}} &=& \frac{E_{\mathrm{hit}}}{1 + T_p - T_p \cdot p_{\mathrm{hit}}^b} \\
    1 + T_p - T_p \cdot p_{\mathrm{hit}}^b  &=& \frac{E_{\mathrm{hit}}}{p_{\mathrm{hit}} \cdot E_{\mathrm{hit}} + (1-p_{\mathrm{hit}})\cdot E_{\mathrm{miss}}} \\
    p_{\mathrm{hit}}^b  &=& \frac{1}{T_p}\bigg(1 + T_p - \frac{E_{\mathrm{hit}}}{p_{\mathrm{hit}} \cdot E_{\mathrm{hit}} + (1-p_{\mathrm{hit}})\cdot E_{\mathrm{miss}}} \bigg) \\
    b^* &=& \frac{1}{\log(p_{\mathrm{hit}})} \cdot \log\Bigg( \bigg(1 + \frac{1}{T_p} - \frac{E_{\mathrm{hit}}}{T_p\cdot p_{\mathrm{hit}} \cdot E_{\mathrm{hit}} + T_p\cdot (1-p_{\mathrm{hit}})\cdot E_{\mathrm{miss}}} \bigg)\Bigg).
\end{eqnarray*}

Now we must simply show that for batch sizes $b < b^*$, it is better to use the slow backup speculator (a.k.a., the primary speculator), and for $b \geq b^*$ it is better to use the fast backup speculator.

We do this by seeing that the $speedup_{\mathrm{slow\_backup}}$ is monotonically decreasing in the batch size $b$ (negative derivative with respect to $b$), whereas the $speedup_{\mathrm{fast\_backup}}$ does not depend on $b$ (obvious from equation).
This shows that if there is a value $b^*$ that makes these two speedups equal, then for $b < b^*$ the slow backup option gives a larger speedup, whereas for  $b \geq b^*$ the fast backup option gives a larger speedup.

\begin{eqnarray*}
    \frac{\partial speedup_{\mathrm{slow\_backup}} }{db} &=& \frac{\partial}{db} \bigg(\frac{E_{\mathrm{hit}}}{1 + T_p - T_p \cdot p_{\mathrm{hit}}^b}\bigg)\\    
    &=& \frac{
        E_{\mathrm{hit}} \cdot T_p \cdot p_{\text{hit}}^b \cdot \log(p_{\mathrm{hit}})
    }{
       (1+T_p - T_p \cdot p_{\text{hit}}^b)^2
    },
\end{eqnarray*}
which is clearly negative because $\log(p_{\mathrm{hit}}) < 0$ and everything else is positive.
This concludes the proof.

$\hfill\square$

\section{Implementation Details}
\label{app:impl_details}

\subsection{Systems Design}

\textbf{Overall Design. } We implement \Sys in a custom inference engine written in PyTorch. Our implementation incorporates PagedAttention \citep{kwon2023efficient}, continuous batching \citep{yu2022orca}, tensor parallelism, BF16 mixed precision, torch compilation, and CUDAGraphs.

The architecture is built around a target model split across 4 devices, and a small draft on a separate device. The two communicate through NCCL once per speculation round, wherein the target sends only the results of the previous round (number of accepted tokens, bonus token sampled), and the draft model uses this to key a lookup into its pre-prepared speculation cache, and immediately returns the tokens and logits corresponding to the pre-prepared speculation on a cache hit, and either random outputs or just-in-time speculation in the case of a cache miss. The key point is that the target model never sees the draft-side speculation cache: it is sent the speculated tokens for its verification outcome immediately after verification. In particular, no KV cache is ever transferred between the devices. 

\textbf{Implementation Details.}
The engine orchestrates from a coordinator process on the main GPU. A scheduler paired with a block manager handles prefill/decode scheduling and page-table bookkeeping. A ModelRunner on each target GPU prepares attention metadata and executes forward passes. In async mode, the draft model runs in a separate process on its own GPU. Target and draft communicate once per iteration via NCCL with fused payloads: the target sends cache keys (sequence ID, accepted-prefix length, recovery token), current sequence lengths, draft block tables for KV addressing, and per-row temperatures; the draft returns a cache-hit bitmap, $K$ speculative tokens per sequence, and $K$-step logits for acceptance.

While the target host maintains page-table bookkeeping for both models, the draft KV cache tensor resides on the draft device. The scheduler ensures the target has sufficient pages for $K+1$ multi-query decoding steps, preempting sequences when lookahead reservations cannot be satisfied. After verification, both page tables are reconciled: completed pages are finalized (hashed for prefix caching), and any pages allocated beyond the accepted suffix are deallocated. This rollback requirement—undoing allocations for rejected tokens that wrote into pre-allocated pages—necessitates a host-side post-processing step after each verification.

\textbf{Design Decisions and Performance Engineering.} During draft speculation, all $F({K+1})$ branches of each sequence are decoded in parallel using a custom sparse attention mask that allows each branch to attend to the verified trunk and its own forking path. We use FlashAttention kernels \citep{shah2024flashattention3} where possible, falling back to FlashInfer \citep{ye2025flashinfer} for multi-query decoding paths requiring custom masks. An example mask is shown in Figure \ref{fig:custom-mask}.

\begin{figure}
\centering
\includegraphics[width=0.65\linewidth]{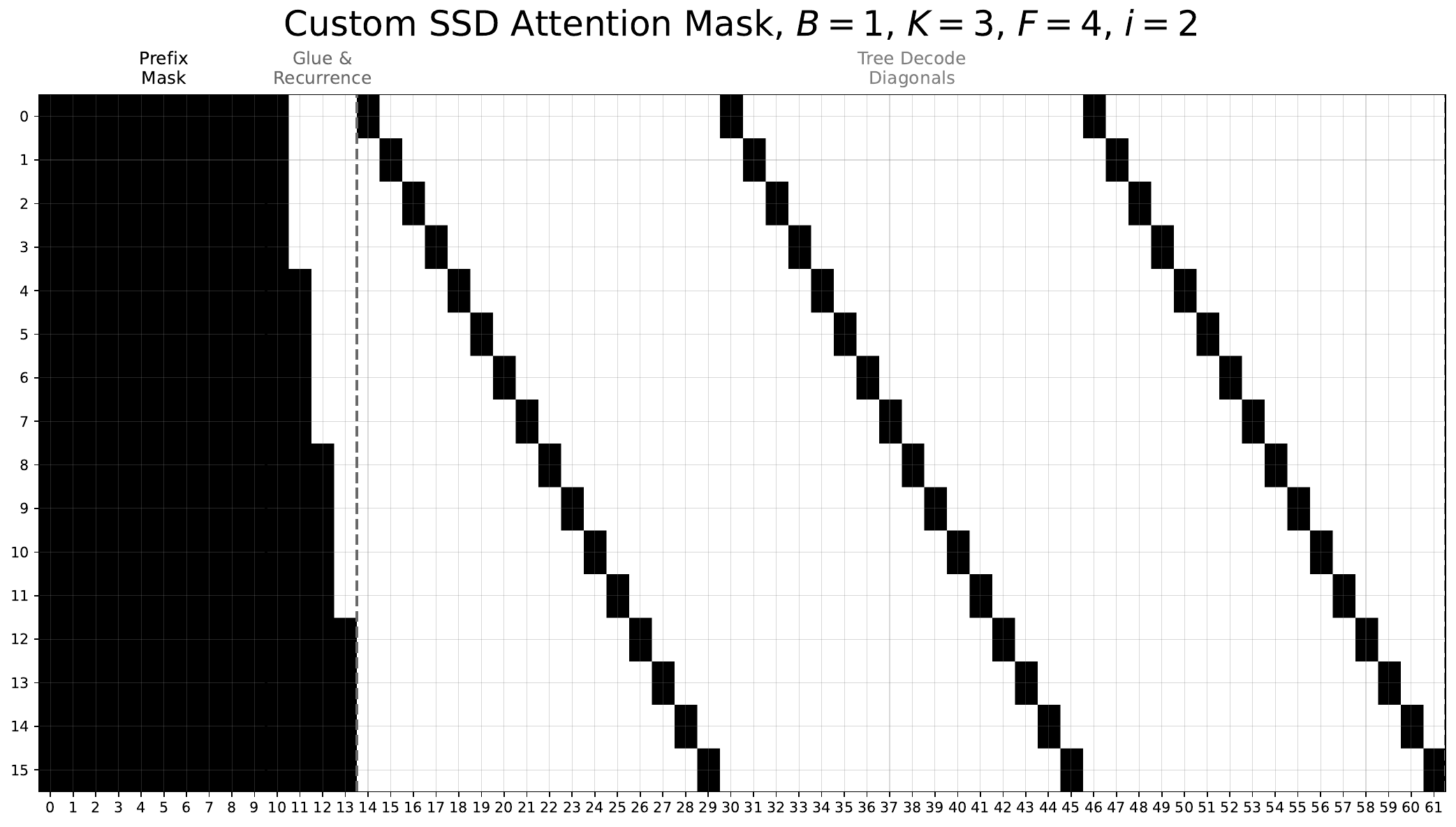}
\caption{
Custom attention mask for multi-query decoding of all $BF({K+1})$ verification branches in parallel. This mask is for $B=1$, $K=3$, uniform fan-out $F=4$, at depth $i=2$. Black indicates tokens that can be attended to. The left block shows attention to the verified prefix; diagonal bands show each branch attending only within its forking path.}
\label{fig:custom-mask}
\end{figure}

Materializing these masks—which depend on prefix length, $B$, $K$, $F$, and step $i$—is substantial overhead. The sparse, non-coalesced memory access patterns in custom-mask attention kernels dominate our critical path, limiting how many steps we can profitably draft. Thus, most end-to-end speedup comes from hiding draft latency rather than increasing lookahead.
Since accepted branches land in fragmented KV cache locations, we perform an extend (prefill) operation of the previous speculation into KV cache before each round of async decoding. This corresponds to the ``Glue \& Recurrence" column in Figure \ref{fig:custom-mask}, enabling all $F$ forked branches to attend to the same prefix.

\subsection{Experimental Design}

\textbf{Datasets.} We take 128 prompts from each dataset and sample 512 decoding tokens for each. We use vanilla sampling (not top-p or top-k). We measure decoding throughput, excluding prefill. All experiments are done on a single node of NVIDIA H100s, though we only use TP=4 (AR, SD) or 5 (SSD) GPUs at once.

\textbf{Baselines.} We compare to the two most widely used open-source  
  inference engines: vLLM (v0.16.0) and SGLang (v0.5.9), with
  autoregressive, standalone speculative decoding, and EAGLE-3. All baselines use tensor parallelism across 4$\times$H100 80GB GPUs with greedy decoding, radix caching disabled, and a single concurrent request (batch size 1) unless otherwise specified. For speculative
   decoding, we use Llama-3.2-1B-Instruct as the draft model for
  Llama-3.1-70B-Instruct and Qwen3-0.6B for Qwen3-32B, proposing 5 draft
  tokens per step. We evaluate on 512 prompts drawn equally from
  HumanEval, Alpaca, GSM8K, and UltraFeedback, generating 512 tokens per prompt. Though SSD outperforms tree-based methods like EAGLE-3, it can be combined with these methods---see Appendix~\ref{app:combine}. Our implementation of vanilla speculative decoding has speed comparable to that of vLLM and SGLang confirming it is a strong baseline.

\subsection{Numerical Results}

\begin{table}[h]
\centering
\small
\setlength{\tabcolsep}{4pt}
\begin{tabular}{lcccccc}
\hline
\textbf{Model} & \textbf{Dataset} & \textbf{AR tok/s} & \textbf{SD tok/s} & \textbf{SSD tok/s} & \textbf{SSD/SD} & \textbf{SSD/AR} \\
\hline
Llama-3.1-Instruct \\ 70B/1B & HumanEval & 54.7 & 176 & 283 & 1.60$\times$ & 5.17$\times$ \\
 & Ultrafeedback & 54.7 & 138 & 215 & 1.55$\times$ & 3.93$\times$ \\
 & Alpaca & 54.7 & 145 & 224 & 1.55$\times$ & 4.10$\times$ \\
 & GSM8k & 54.7 & 188 & 301 & 1.60$\times$ & 5.50$\times$ \\
 & Average & 54.7 & 161.8 & 255.8 & 1.58$\times$ & 4.68$\times$ \\
\hline
Qwen-3 \\ 32B/0.6B & HumanEval & 88.8 & 146 & 222 & 1.52$\times$ & 2.50$\times$ \\
 & Ultrafeedback & 88.8 & 122 & 174 & 1.43$\times$ & 1.96$\times$ \\
 & Alpaca & 88.8 & 127 & 185 & 1.47$\times$ & 2.08$\times$ \\
 & GSM8k & 88.8 & 152 & 234 & 1.54$\times$ & 2.64$\times$ \\
 & Average & 88.8 & 136.8 & 203.8 & 1.49$\times$ & 2.29$\times$ \\
\hline
\end{tabular}
\label{tab:throughput_results}
\end{table}

\subsection{Baseline Sweeps}
\label{app:baseline_sweeps}

Cells report decode throughput (tok/s); $\dagger$ indicates incompatible configs due to known bugs in the library version used (SGLang~0.5.9 / vLLM~0.16.0).
The best configuration per method is \textbf{bolded}. We sweep speculative lookahead and tree size for both vanilla SD and EAGLE-3 for SGLang, and just lookahead for vLLM since tree verification codepaths are known to be incomplete. 

\begin{table}[h]
\centering
\small
\setlength{\tabcolsep}{4pt}
\begin{tabular}{lccccc}
\hline
\multicolumn{6}{c}{\textbf{SGLang EAGLE-3} (\texttt{steps} $\times$ \texttt{topk})} \\
\hline
\textbf{steps $\backslash$ topk} & \textbf{1} & \textbf{2} & \textbf{3} & \textbf{4} & \textbf{5} \\
\hline
2  & 122.8 & 124.7 & 127.3 & $\dagger$ & $\dagger$ \\
4  & 157.1 & 157.8 & 164.9 & $\dagger$ & $\dagger$ \\
6  & 171.3 & 178.9 & 183.0 & $\dagger$ & $\dagger$ \\
8  & 172.4 & 183.9 & \textbf{192.8} & 190.2 & $\dagger$ \\
10 & 168.7 & 183.0 & 192.5 & $\dagger$ & 186.9 \\
12 & 164.8 & 178.7 & 186.3 & 187.6    & 185.4 \\
14 & 158.4 & 173.8 & 180.3 & 182.9    & 176.9 \\
\hline
\multicolumn{6}{c}{} \\[-6pt]
\hline
\multicolumn{6}{c}{\textbf{SGLang Standalone} (\texttt{steps} $\times$ \texttt{topk})} \\
\hline
\textbf{steps $\backslash$ topk} & \textbf{1} & \textbf{2} & \textbf{3} & \textbf{4} & \textbf{5} \\
\hline
2  & 123.0 & 123.9 & 125.4 & $\dagger$ & $\dagger$ \\
4  & 159.4 & 161.2 & 164.9 & $\dagger$ & $\dagger$ \\
6  & 176.7 & 182.3 & 186.7 & $\dagger$ & 164.9 \\
8  & 182.1 & 192.3 & 197.7 & $\dagger$ & 163.1 \\
10 & 181.5 & 195.7 & \textbf{201.9} & 178.0 & 160.3 \\
12 & 180.4 & 195.2 & 199.0 & $\dagger$ & $\dagger$ \\
14 & 175.7 & 193.6 & $\dagger$ & 171.5 & $\dagger$ \\
\hline
\end{tabular}
\caption{SGLang~0.5.9 Llama-3.1-70B/1B baseline sweeps (decode tok/s).}
\label{tab:sglang_baseline_sweeps}
\end{table}

\begin{table}[h]
\centering
\small
\setlength{\tabcolsep}{4pt}
\begin{tabular}{lcccccc}
\hline
\textbf{Method} & $K=2$ & $K=4$ & $K=6$ & $K=8$ & $K=10$ & $K=12$ \\
\hline
vLLM EAGLE-3    & 143.7 & 181.4 & \textbf{197.6} & 197.0 & 191.8 & 187.9 \\
vLLM Standalone & \textbf{136.0} & 117.9 & 100.9 & 89.1  & 78.5  & 70.5  \\
\hline
\end{tabular}
\caption{vLLM~0.16.0 Llama-3.1-70B/1B baseline sweeps over speculative lookahead $K$ (decode tok/s).}
\label{tab:vllm_baseline_sweeps}
\end{table}

\section{SSD Overhead}
\label{app:overhead}

Let $B$ be the target batch size, $K$ the speculation lookahead, and $F$ the (uniform) fan out factor for the draft model guessing verification bonus tokens. 

\textbf{Compute.} Let $\hat{c}$ be the compute required for a draft forward pass, normalized by units of target FLOPs. Since the draft decodes $BK(K+1)F$ tokens per round in SSD but $BK$ tokens per round in SD, we incur a factor of $\hat{c}(K+1)F$ more FLOPs on the draft relative to ordinary speculative decoding. Recall this is because each of the $K$ draft steps decodes $B(K+1)F$ tokens in parallel using a custom attention mask. 

In much the same way that SD uses more FLOPs to achieve lower latency (see \citep{leviathan2023}, Section 3.4), SSD applies the same philosophy to use \textit{even more FLOPs} to achieve \textit{even lower latency} than even SD itself. In the SD setting, tokens speculated by the draft that were rejected by the target constitute wasted compute. In the SSD setting (in addition to the above), we have that entire \textit{chains} decoded pre-emptively in parallel for anticipated verification outcomes constitute wasted compute. Thus, SSD introduces new tradeoffs between compute and latency that were not possible before. 

\textbf{Memory.} The draft model must build up a speculation cache as it speculates asynchronously. This has possible verification outcomes as keys and the corresponding tokens/logits as values. Concretely, this means storing a tensor of $BK(K+1)F$ tokens that are decoded, in the form of length-$K$ speculations stored for $B(K+1)F$ possible verification outcomes. For each of these tokens, we also store logits of size $V$, so the speculation cache stores overall $O\left(BFK(K+1)(V+1)\right)$ bits, ending up around hundreds of megabytes in practice. Given this cache is refreshed every speculation round, this ends up small enough to be a non-issue in practice, as the HBM on modern GPUs is much larger. 

\textbf{Communication.} The draft and target model synchronize once per speculation round. The target model sends the outcome of the previous round of speculation (the number of accepted tokens and recovery token for each sequence, $O(B)$ bits of information). The draft sends back the newly speculated tokens (``cache hits") as well as their logits (which the target will need for verification). This is $O(BKV)$ bits of information. All communications are device to device over NCCL via NVLink, which is fast enough that communications are not a bottleneck in practice. 

\color{black}
\section{Extended Related Work}
\label{sec:appdx-related-work}

SSD is complementary to other lines of work that accelerate LLM inference through orthogonal means: hardware-aware algorithms like FlashAttention \citep{Dao2022FlashAttention}, KV-cache management and paging \citep{kwon2023efficient,zhang2023h2o,Hooper2024KVQuant}, sparse or approximate attention \citep{Beltagy2020Longformer,Zaheer2020BigBird}, and multi-token or tree-based draft methods \citep{Cai2024Medusa,Fu2024Lookahead}. SSD targets a different bottleneck---the sequential dependence between drafting and verification---and can in principle be combined with any of these.

\section{Combining SSD with SD Variants}
\label{app:combine}

\textbf{EAGLE-3.} EAGLE-3 \citep{eagle3} improves the acceptance rate $\alpha$ by training the draft to condition on target activations extracted during the previous round's verification. When target activations are unavailable mid-speculation, the draft conditions on its own activations as a surrogate, with acceptance degrading slowly as more self-generated activations are used.

This is entirely compatible with SSD, with one nuance: since the SSD draft begins speculating round $T{+}1$ pre-emptively, it does not yet have target activations from round $T$ (unlike standard EAGLE-3, which waits for them). With lookahead $K$, the draft thus conditions on up to $2K$ self-generated tokens, lowering acceptance on the latter $K$. This can be mitigated by training the EAGLE-3 draft to preserve acceptance over longer periods of self-conditioning. This is depicted in Figure~\ref{fig:eagle3-ssd}. 

\begin{figure}[h]
    \centering
    \includegraphics[width=0.85\linewidth]{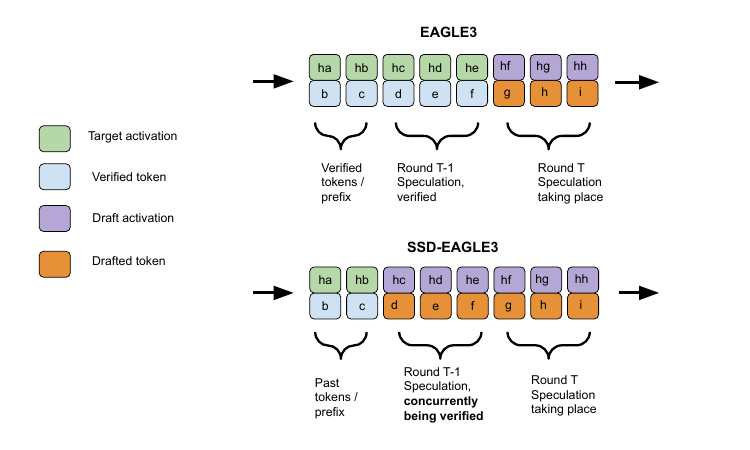}
    \caption{Comparison of activation conditioning in EAGLE-3 vs.\ SSD-EAGLE-3. In standard EAGLE-3 (bottom), the round $T$ draft conditions almost entirely on target activations extracted from the already-verified round $T{-}1$. In SSD-EAGLE-3 (top), round $T$ speculation begins \textit{before} round $T{-}1$ verification completes, so the draft must substitute its own activations (light nodes) for the not-yet-available target activations. Conditioning on more draft activations can degrade quality of EAGLE speculations if the draft model is not trained to use draft self-conditioning effectively.}
    \label{fig:eagle3-ssd}
\end{figure}

In our implementation, the target model sends the number of accepted tokens per sequence as well as the target activations at each relevant token position to the draft over NCCL for the draft model to condition on when speculating. 

\textbf{Token-Tree Methods.} 
Token-tree SD methods~\citep{specinfer,sequoia,eagle2,eagle3} can be combined naturally with SSD.
SSD drafts a linear chain for many possible verification outcomes.
To combine SSD with these token-tree approaches, one would simply pre-speculate (and verify) a token tree for each verification outcome instead of a token chain.
In practice, this requires a more elaborate attention mask than that presented in Figure \ref{fig:custom-mask}.

\color{black}
\section{Additional Experiments}
\label{app:qwen}

Here, we reproduce similar trends from the main text on the Qwen3 model family, demonstrating that our results and algorithms are model and dataset agnostic. 

\begin{figure}[ht]
    \centering
    \includegraphics[width=\linewidth]{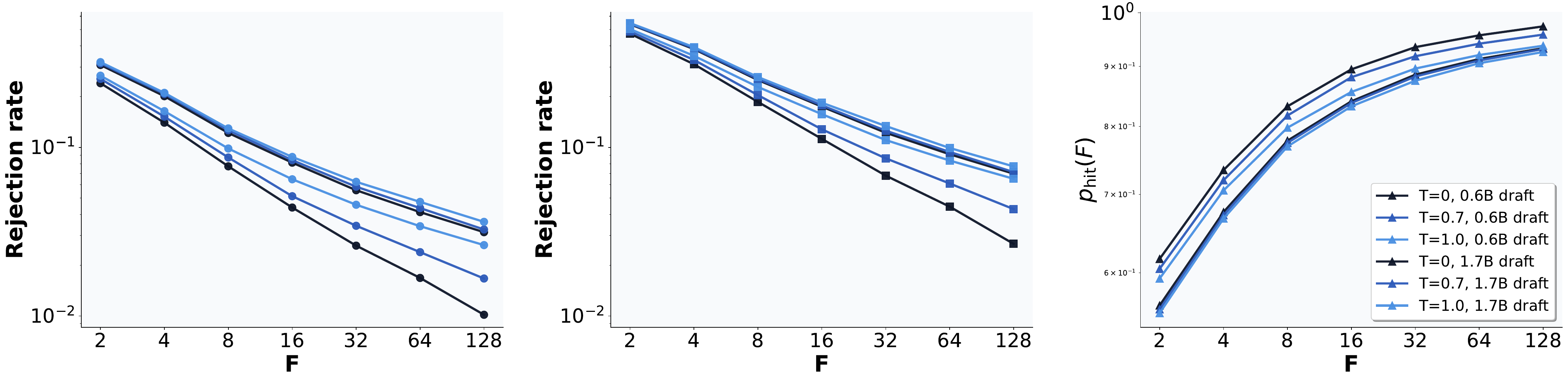}
    \caption{Rejection rate and cache hit rate scaling for Qwen-3 model family. Like the Llama-3 model family, we see it is an approximate power law in the fan-out, so that cache hit rates increase steadily as we grow $F$, the size of our cache.}
    \label{fig:dts-qwen}
\end{figure}

\begin{figure}[ht]
    \centering
    \includegraphics[width=\linewidth]{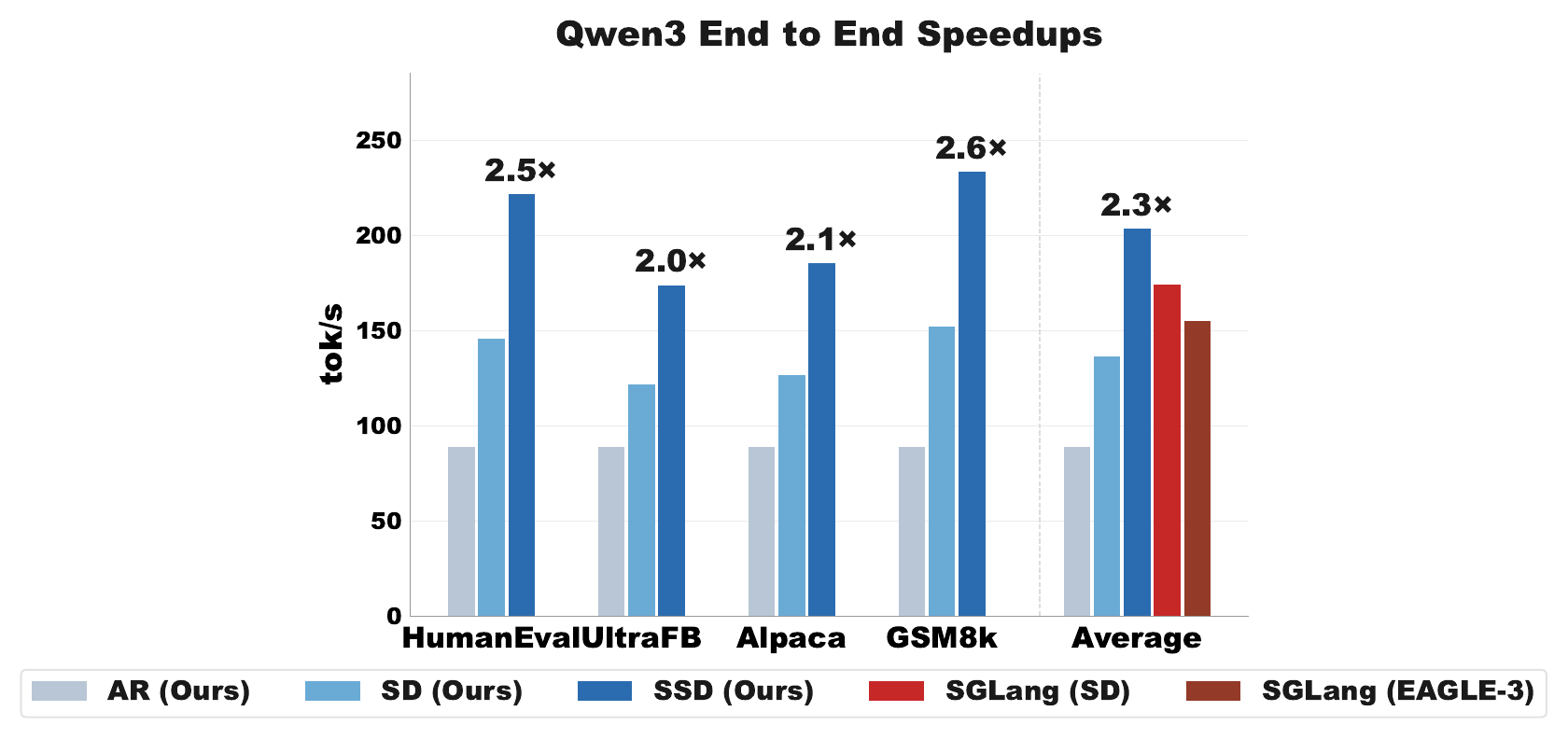}
    \caption{End-to-end decoding speed comparison of SSD compared to SD and standard autoregressive decoding on Qwen-3 32B across four datasets.}
    \label{fig:e2e-qwen}
\end{figure}

\begin{figure}[ht]
    \centering
    \includegraphics[width=\linewidth]{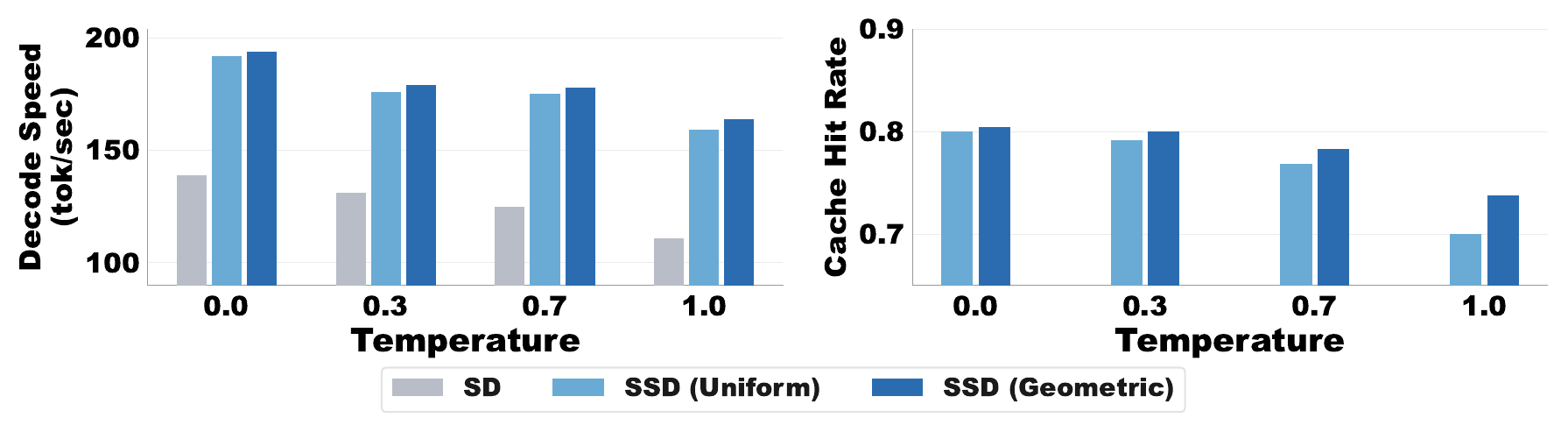}
    \caption{End-to-end speed for Qwen-3 model family compared to synchronous baselines.}
    \label{fig:temp-fanout-qwen}
\end{figure}

\begin{figure}[ht]
    \centering
    \includegraphics[width=0.7\linewidth]{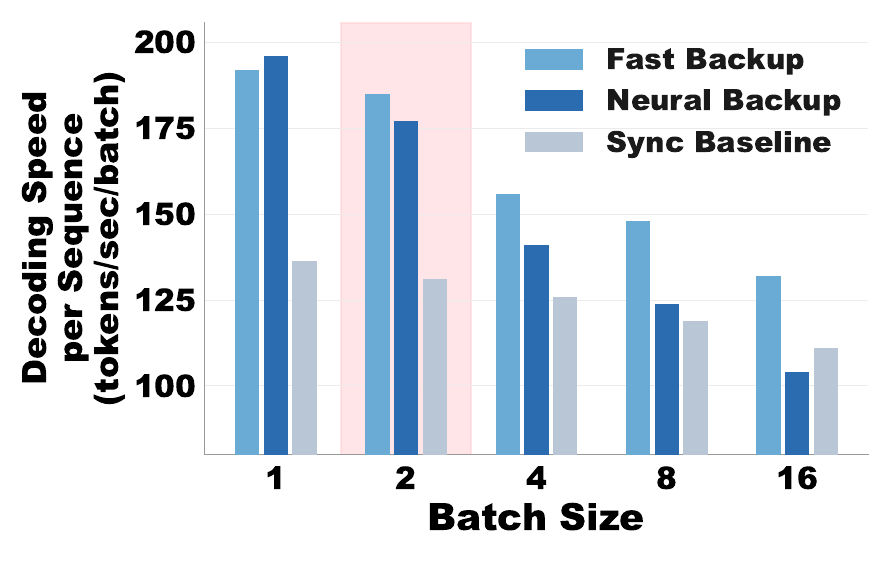}
    \caption{Batch size scaling of Qwen-3 models compared to synchronous baselines.}
    \label{fig:qwen3-bsz}
\end{figure}

\end{document}